\documentclass{article}

\usepackage[preprint]{neurips_2026}

\usepackage[utf8]{inputenc}
\usepackage[T1]{fontenc}
\usepackage{hyperref}
\usepackage{url}
\usepackage{booktabs}
\usepackage{amsfonts}
\usepackage{nicefrac}
\usepackage{microtype}
\usepackage{xcolor}
\usepackage{amsmath}
\usepackage{amssymb}
\usepackage{amsthm}
\usepackage{graphicx}
\usepackage{algorithm}
\usepackage{algorithmic}
\usepackage{cleveref}
\usepackage{dsfont}
\usepackage{wrapfig}
\usepackage{multirow}
\usepackage{enumitem}
\usepackage{caption}
\usepackage{subcaption}
\usepackage{booktabs}
\usepackage{makecell}

\theoremstyle{plain}      % gras + italique (pour théorèmes, corollaires)
\newtheorem{theorem}{Theorem}

\newtheorem{lemma}[theorem]{Lemma}
\newtheorem{proposition}[theorem]{Proposition}

\theoremstyle{definition} % gras + texte droit (pour définitions, assumptions)
\newtheorem{assumption}{Assumption}

\theoremstyle{remark}     % italique léger (pour remarques)
\newtheorem{remark}{Remark}

\title{LiST: Lipschitz Scaling Training\\
for Robust and Calibrated Neural Networks}

\author{
Arthur Chiron \\
IRIT \\
Toulouse, France \\
\And 
Franck Mamalet \\
IRT Saint Exupéry \\
Toulouse, France \\
\And 
Thomas Massena \\
IRIT, SNCF \\
Toulouse, France \\
\And
Thomas Deltort \\
IRIT, Airbus \\
Toulouse, France \\
\And
Mathieu Serrurier \\
IRIT \\
Toulouse, France \\
}

\begin{document}
    \maketitle

    \begin{abstract}
        While accuracy, robustness, and calibration are all essential for reliable neural networks, they are often studied separately; developing models that satisfy all three simultaneously remains a central challenge. Lipschitz-constrained models guarantee robustness by design, yet the manual selection of the Lipschitz constraint $L$ governs the resulting accuracy-robustness trade-off, and their calibration properties remain largely underexplored. In this work, we highlight a theoretical and empirical link between the enforced Lipschitz constraint and Temperature Scaling, a state-of-the-art calibration method. Specifically, we find that for a given training scheme, there exists a non-trivial value $L^*$ that yields an out-of-the-box calibrated network, and that calibration acts as a principled criterion to select a well-defined operating point on the accuracy-robustness Pareto front. Leveraging these insights, we introduce Lipschitz Scaling Training (LiST), a novel training paradigm that iteratively adjusts the global Lipschitz constant to reach this operating point. Through a margin parameter in the training loss, LiST further enables the construction of a fully calibrated Pareto front, allowing users to navigate the accuracy-robustness trade-off while remaining calibrated throughout. At convergence, LiST also enables the reintegration of calibration data into training, improving sample efficiency without sacrificing calibration. We validate LiST on CIFAR-10/100 and Tiny-ImageNet, demonstrating competitive accuracy and robustness against constrained and unconstrained baselines, while remaining calibrated out of the box. Code is available at~\href{https://anonymous.4open.science/r/LiST-E412}{GitHub}.
    \end{abstract}

    \section{Introduction}
    \label{sec:introduction}
    
    As deep learning models are increasingly deployed in safety-critical domains, from autonomous driving to medical diagnosis, ensuring their trustworthiness is paramount. This objective relies on pillars such as \textit{explainability}~\citep{ribeiro_why_2016, lundberg_unified_2017}, \textit{fairness}~\citep{dwork_fairness_2012, hardt_equality_2016}, and \textit{reliability}. The latter requires effective uncertainty quantification, whether through aligning predicted confidence with empirical accuracy, namely \textit{calibration}~\citep{brier_verification_1950, guo_calibration_2017}, or by providing formal coverage guarantees via \textit{conformal prediction}~\citep{vovk_algorithmic_2022, angelopoulos_conformal_2023}, as well as \textit{robustness}~\citep{szegedy2014intriguing, goodfellow2015explaining} to ensure stability against input perturbations. However, standard neural networks remain notoriously brittle and overconfident; they blindly produce high-confidence predictions even when faced with out-of-distribution or adversarial inputs.

    To address these limitations, research has followed, among others, two largely independent paths. On one hand, calibration methods seek to align confidence with accuracy, either through post-hoc rescaling like Temperature Scaling (TS)~\citep{guo_calibration_2017} or Dirichlet calibration~\citep{kull_beyond_2019}, or via specific losses~\citep{mukhoti_calibrating_2020}. While effective at reducing overconfidence, these techniques generally do not incentivize robustness. Independently, Lipschitz-constrained Neural Networks (LNNs)~\citep{anil_sorting_2019, serrurier_achieving_2021} provide deterministic robustness guarantees by enforcing an upper bound on the network's global Lipschitz constant. While this constant theoretically dictates the network's position on the accuracy-robustness Pareto frontier, it is usually fixed to 1 in practice to satisfy the product bound rule of spectrally normalized or orthogonalized layers. Instead, practitioners rely on a temperature parameter within the loss function~\citep{bethune_pay_2022} to navigate this trade-off. However, this parameter is typically treated as a static value and has to be tuned, without leveraging the model's internal calibration as a guiding signal.

    In this work, we argue that calibration (through Temperature Scaling) and robustness could be handled together. We highlight the structural duality between the temperature parameter and the Lipschitz constant, showing that they have an equivalent effect on the network's logits, albeit through fundamentally different mechanisms. Since Temperature Scaling seeks a scalar $T^*$ that improves the calibration of a given network by rescaling its logits, this duality suggests the existence of an $L^*$ that achieves the same effect intrinsically. Based on this insight, we hypothesize that the ``calibrated'' Lipschitz constant $L^*$ corresponds to a principled equilibrium on the accuracy-robustness Pareto front. To reach this operating point without manual tuning, we propose \textbf{LiST (Lipschitz Scaling Training)}. LiST is a dynamic training strategy that leverages the temperature scaling factor, computed on a calibration set at each epoch, as a feedback signal to adaptively tune the network's Lipschitz constraint. By converging towards this intrinsic constant, LiST enables a unique capability: yielding models that are simultaneously certifiably robust and calibrated out-of-the-box. Furthermore, by varying an offset parameter in the training loss, LiST enables to recover a fully calibrated Pareto front, allowing users to navigate the accuracy-robustness trade-off while remaining calibrated throughout. Finally, once this constant is identified and fixed, freezing the Lipschitz constraint strictly bounds the model's capacity, making the re-integration of calibration data into the training process theoretically safe and practically beneficial for sample efficiency, without sacrificing calibration on the test set.

    Our contributions are summarized as follows:
    \begin{itemize}
        \item We formalize the structural duality between Temperature Scaling and Lipschitz constraints, and show that calibration acts as a principled, unsupervised criterion to select a natural operating point on the accuracy-robustness Pareto front.
        \item We introduce LiST, a dynamic training algorithm that automatically identifies $L^*$ by using the temperature scaling factor as a feedback signal, yielding models that are simultaneously calibrated and certifiably robust by construction.
        \item We show that LiST enables the construction of a fully calibrated Pareto front by varying an offset parameter in the training loss, decoupling calibration from the accuracy-robustness trade-off.
        \item We empirically demonstrate that freezing the constraint at $L^*$ makes the re-integration of calibration data into training safe and beneficial across CIFAR-10/100 and Tiny-ImageNet, improving sample efficiency without sacrificing calibration.
    \end{itemize}

    \section{Background and Related Work}
    \label{sec:background_and_related_work}

    We consider a neural network $f: \mathcal{X}\to \mathbb{R}^{C}$ mapping an input $\mathbf{x} \in \mathcal{X} \subseteq \mathbb{R}^{d}$ to a vector of logits $\mathbf{z} = f(\mathbf{x})$. The true label is denoted $y$ (or $\mathbf{y}$ for its one-hot encoding), while the predicted class is $\hat{y}= \arg\max\mathbf{z}$, and the associated confidence vector is given by $\hat{\mathbf{p}}= \mathrm{softmax}(\mathbf{z})$.

    \subsection{Lipschitz Constrained Neural Networks and Adversarial Robustness}
    \label{subsec:lipschitz_constrained_neural_networks_and_adversarial_robustness}

    The function $f$ is said to be globally $L$-Lipschitz continuous with respect to the $\ell_{2}$-norm if for all $\mathbf{x}, \mathbf{x}' \in \mathcal{X}$:
    \begin{equation}
        \|f(\mathbf{x}) - f(\mathbf{x}')\|_{2}\le L \|\mathbf{x} - \mathbf{x}'\|_{2}.
    \end{equation}
    In practice, computing the exact Lipschitz constant of a neural network is known to be an NP-hard problem~\citep{virmaux_lipschitz_2018}. Therefore, for a feed-forward network composed of $D$ layers $f = \phi_{D}\circ \dots \circ \phi_{1}$, the global Lipschitz constant $L$ is typically controlled via its upper bound, defined as the product of the Lipschitz constants of all constituent layers. This encompasses linear transformations (bounded by their spectral norms) as well as non-linear operations such as activations, pooling, and normalization layers. Consequently, if each layer $\phi_{i}$ is constrained to be $L_{i}$-Lipschitz, the global constant satisfies the inequality $L \le \prod_{i=1}^{D}L_{i}$.
    
    The primary motivation for constraining $L$ is to both \textit{improve} and \textit{guarantee} robustness against adversarial examples~\citep{szegedy2014intriguing, goodfellow2015explaining}: small perturbations of the input that cause the classifier to make incorrect predictions. Formally, an adversarial example is a perturbed input $\mathbf{x} + \boldsymbol{\delta}$ such that $\arg\max f(\mathbf{x} + \boldsymbol{\delta}) \neq y$ and $\|\boldsymbol{\delta}\|_2 \leq \varepsilon$ for some perturbation budget $\varepsilon > 0$. This has motivated two complementary approaches to approximate the true \textit{Robust Accuracy} $\mathrm{RA}_\varepsilon$, defined via the largest budget $\varepsilon$ under which the classifier remains correct:
    \begin{equation}
        \begin{gathered}
            R_{\mathrm{true}}(\mathbf{x}) = \sup \left\{ \varepsilon \geq 0 : \arg\max f(\tilde{\mathbf{x}}) = y,\ \forall \tilde{\mathbf{x}} : \|\tilde{\mathbf{x}} - \mathbf{x}\|_2 \leq \varepsilon \right\}, \\
            \mathrm{RA}_\varepsilon(f) = \frac{1}{N}\sum_{i=1}^{N} \mathds{1}\left[ R_{\mathrm{true}}(\mathbf{x}_i) \geq \varepsilon \right].
        \end{gathered}
    \end{equation}
    Since computing $R_{\mathrm{true}}$ exactly is NP-hard, attack algorithms such as FGSM~\citep{goodfellow2015explaining}, PGD~\citep{madry_towards_2018}, or AutoAttack~\citep{croce_reliable_2020} are used in practice to search for worst-case perturbations within the budget. Since these attacks may fail to find the true worst case, the resulting \textit{Empirical Robust Accuracy} $\mathrm{ERA}_\varepsilon$ provides an upper bound on $\mathrm{RA}_\varepsilon$.
    
    For $L$-Lipschitz classifiers, the \textit{certified robust radius} $R_{\mathrm{cert}}(\mathbf{x})$ provides instead a provable lower bound on $R_{\mathrm{true}}(\mathbf{x})$~\citep{tsuzuku_lipschitz-margin_2018}, and the \textit{Certified Robust Accuracy} $\mathrm{CRA}_\varepsilon$ is defined symmetrically:
    \begin{equation}
        R_{\mathrm{cert}}(\mathbf{x}) = \mathds{1}_{\hat{y}=y} \cdot \frac{\mathcal{M}(\mathbf{x})}{\sqrt{2}L}, \quad \mathrm{CRA}_\varepsilon(f) = \frac{1}{N}\sum_{i=1}^{N} \mathds{1}\left[ R_{\mathrm{cert}}(\mathbf{x}_i) \geq \varepsilon \right],
    \end{equation}
    with $\mathcal{M}(\mathbf{x}) = z_y - \max_{j \neq y} z_j$ the margin at point $\mathbf{x}$. Together, these two quantities bracket the true robust accuracy: $\mathrm{CRA}_\varepsilon \leq \mathrm{RA}_\varepsilon \leq \mathrm{ERA}_\varepsilon$, making them complementary tools for evaluating robustness. The expression of $R_{\mathrm{cert}}$ further reveals a fundamental trade-off: maximizing it requires simultaneously increasing the margins and minimizing $L$. Historically, certified defenses have prioritized structural guarantees by strictly fixing $L=1$ via spectral normalization or orthogonalization~\citep{anil_sorting_2019, bethune_pay_2022, boissin2025adaptive}. However, this rigid constraint often hinders the network's expressivity, preventing it from producing the large margins required for certification. To overcome this bottleneck, recent works~\citep{bethune_pay_2022, prach_almost-orthogonal_2023, prach2025intriguing, lai2025enhancing} introduce a temperature parameter $\tau$ and an offset $\xi$ in the training loss:
    \begin{equation}
        \mathcal{L}(\mathbf{z},\mathbf{y}) = \mathrm{cross\_entropy}\left(\mathrm{softmax}\left(\frac{\mathbf{z} - \xi \mathbf{y}}{\tau}\right),\ \mathbf{y}\right).
    \end{equation}
    We adopt this loss in our work, with one key difference: rather than treating $\tau$ as a tunable hyperparameter, we absorb it directly into the Lipschitz constant $L$ of $f$, since dividing the logits by $\tau$ is equivalent to scaling $L$ by $1/\tau$. This makes the implicit effect of $\tau$ on the Lipschitz constant explicit and transparent.

    \subsection{Model Calibration}
    \label{subsec:model_calibration}

    While accuracy measures the frequency of correct predictions, \textit{calibration} measures the reliability of the confidence scores. A model is perfectly calibrated if its predicted probability reflects its true correctness likelihood. Formally:
    \begin{equation}
        \forall p \in [0,1],\ \mathbb{P}(\hat{y} = y \mid \max \hat{\mathbf{p}} = p) = p.
    \end{equation}

    The study of calibration has a long history:~\citep{brier_verification_1950} introduced the first proper scoring rule to measure probabilistic forecast accuracy, and~\citep{murphy_reliability_1977} proposed Reliability Diagrams as a visual tool to assess calibration quality. In the deep learning era, the Expected Calibration Error (ECE)~\citep{naeini_obtaining_2015} has become the standard metric. It partitions predictions into $M$ bins based on confidence and computes the weighted average absolute difference between accuracy and confidence in each bin:
    \begin{equation}
        \text{ECE}= \sum_{m=1}^{M}\frac{|B_{m}|}{N}\left| \text{acc}(B_{m}) - \text{conf}(B_{m}) \right|.
    \end{equation}
    While ECE measures the overall magnitude of miscalibration, it does not indicate its direction. The Expected Signed Calibration Error (ESCE)~\citep{verhaeghe_generalizable_2023, ledda_robustness_2025} addresses this by removing the absolute value, yielding a signed metric where negative values indicate underconfidence and positive values indicate overconfidence. In practice, miscalibration tends to be systematic rather than sample-specific, making sign cancellation across bins rare.

    Several families of methods have been proposed to improve calibration. Training-time approaches modify the loss function directly, such as label smoothing~\citep{muller_when_2019} or Focal Loss~\citep{mukhoti_calibrating_2020}. Post-hoc methods instead recalibrate a fixed model on a held-out calibration set $\mathcal{D}_\mathrm{cal}$, disjoint from the training data. Among these, Temperature Scaling (TS)~\citep{guo_calibration_2017} is the most widely adopted: it introduces a single scalar $T > 0$ to rescale the logits before the softmax:
    \begin{equation}
        \mathbf{\hat{p}} = \mathrm{softmax}(\mathbf{z}/T),
    \end{equation}
    where $T$ is optimized by minimizing the negative log-likelihood on $\mathcal{D}_\mathrm{cal}$. Extensions of TS such as Vector Scaling and Dirichlet calibration~\citep{kull_beyond_2019} apply class-wise or full covariance rescaling, at the cost of additional parameters and increased risk of overfitting on $\mathcal{D}_\mathrm{cal}$.

    In the context of this work, we define an intrinsically calibrated network as one for which the optimal temperature scaling factor is one ($T^* = 1$), meaning no post-hoc correction is needed (regarding TS). Any deviation $T^* \neq 1$ is interpreted as a manifestation of miscalibration. As we will show in~\Cref{sec:the_lipschitz_temperature_duality}, the Lipschitz constant $L$ plays a structural role in determining this optimal temperature, which motivates our approach.

    \section{The Lipschitz-Temperature duality}
    \label{sec:the_lipschitz_temperature_duality}

    In this section, we formalize the relationship between the global Lipschitz constant $L$ of a network and the temperature parameter $T$ used in Temperature Scaling (TS). We highlight that these quantities are functionally coupled, suggesting that the search for an optimal calibration temperature can be reformulated as a search for an optimal Lipschitz constant.

    \subsection{Link between Temperature Scaling \texorpdfstring{$T$}{T} and Lipschitz constant \texorpdfstring{$L$}{L}}
    \label{subsec:link_between_t_and_l}

    Consider a neural network $f$ whose constituent layers are each Lipschitz continuous (as in~\Cref{subsec:lipschitz_constrained_neural_networks_and_adversarial_robustness}), so that $f$ admits a finite global Lipschitz constant $L$. Standard TS yields a calibrated network $\Tilde{f}(x) = f(x)/T$ which is trivially $(L/T)$-Lipschitz. Thus, TS can be viewed as a post-hoc rescaling of the constant. Conversely, changing the Lipschitz constant of the network $f$ to a target Lipschitz constant $K$ can be achieved by dividing $f$ by a factor $T = L/K$, which can be viewed as form of TS. Consequently, $T$ and $L$ act as interchangeable scaling factors on the logits. Based on this duality, we hypothesize the existence of an optimal Lipschitz constant $L^{*}$ that yields an intrinsically calibrated network.
    
    We empirically validate the equivalence derived above by analyzing the sensitivity of the ECE to changes in logit scaling mechanisms. \Cref{fig:correlation} illustrates this parallel by contrasting the ECE profile of (a) networks trained with varying Lipschitz constants against (b) a fixed 1-Lipschitz network subjected to post-hoc scaling. As observed, both mechanisms yield a qualitatively similar profile. In both cases, a low scaling factor results in under-confidence, while a high factor leads to over-confidence (see~Appendix~\ref{app:additional_results_and_qualitative_analysis}). However, we observe a quantitative mismatch: the optimal training constraint ($L^{*}\approx 32$) differs significantly from the optimal post-hoc scale applied to the 1-Lipschitz baseline ($1/T^{*}\approx 1.85$). This mismatch is mainly due to the fact that the 1-Lipschitz network is much less accurate than the 32-Lipschitz one, which likely yields a smoother calibrated posterior and thus reduces the need for a large temperature correction. This dynamic between the optimal network with respect to the Lipschitz constant and Temperature Scaling will be the core of our algorithm.
    \begin{figure}[t]
        \centering
        \includegraphics[width=\linewidth]{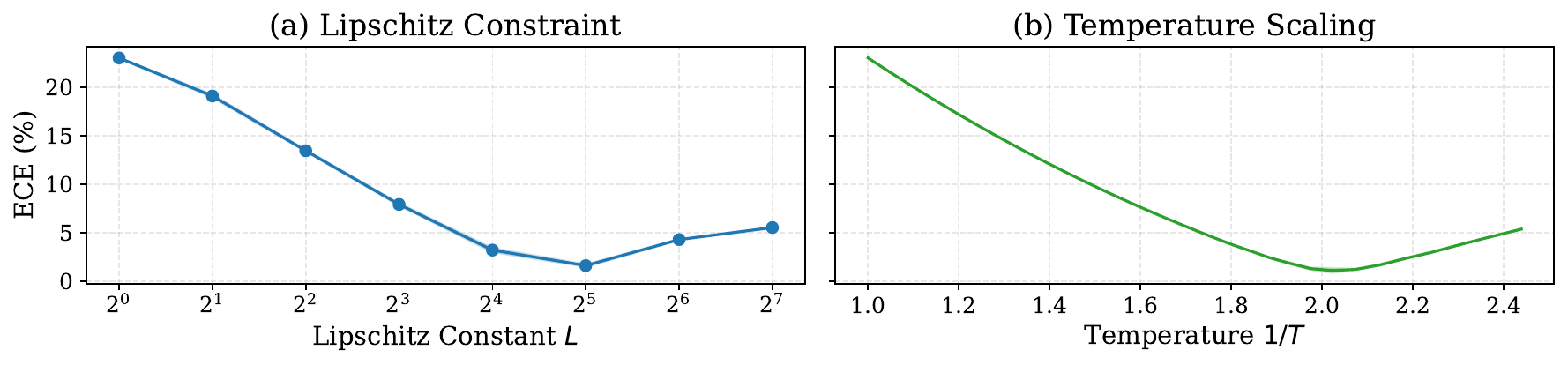}
        \caption{
        \textbf{ECE as a function of the logit scaling mechanism.} (a) Networks trained with varying fixed Lipschitz constants $L$. (b) Post-hoc Temperature Scaling on a fixed 1-Lipschitz network (plotted against $1/T$). Both profiles are qualitatively similar.
        % \textbf{Empirical correlation between T and L.} We report the test ECE for two scenarios: (a) Training independent networks with different fixed Lipschitz constants $L$. (b) Varying the post-hoc temperature $T$ on a fixed 1-Lipschitz network (plotted against $1/T$). Both mechanisms exhibit a qualitatively similar profile.
        }
        \label{fig:correlation}
    \end{figure}
    \subsection{On the Optimal Lipschitz Constant}
    \label{subsec:on_the_optimal_lipschitz_constant}

    As observed in the experiment of~\Cref{fig:correlation}, where an optimal value of $L$ naturally emerges, we formulate the hypothesis that there exists a specific training constraint that achieves near-optimal calibration naturally, comparable to the performance of TS on a given network. We formalize this existence result in~Proposition~\ref{prop:existence_L_star} (Appendix~\ref{app:existence_of_the_optimal_lipschitz_constant}), which establishes that under mild regularity assumptions on the training dynamics, there always exists an intrinsic $L^*$ such that $T^*(L^*) = 1$.

    However, determining \textit{a priori} which $L^*$ will yield this out-of-the-box calibrated network is non-trivial, as it depends on the architecture, dataset, and training dynamics. Ideally, knowing this constant beforehand would eliminate the need for a calibration set (typically used for post-hoc calibration), a significant advantage in low-data regimes. A naive strategy might be to train a reference $1$-Lipschitz network to compute its optimal temperature $T^{*}$, then retrain a new model with $L' = 1/T^{*}$. Yet, this heuristic fails due to the mismatch observed above: constraining $L$ fundamentally alters the optimization trajectory and the learned features~\citep{bethune_pay_2022}, which prevents the naively derived constant from being optimal.

    Furthermore, imposing a low Lipschitz constant theoretically bounds the confidence of the model, even at optimality. While minimizing the cross-entropy loss aims to align the network output $\hat{p}$ with the true posterior distribution $p(y|x)$, the Lipschitz constraint restricts the slope and magnitude of the learned function. If the true posterior contains sharp transitions, corresponding to regions of high certainty separated by narrow margins, a sufficiently low $L$ physically prevents the network from fitting these variations. Instead, the network yields a smoothed approximation of the posterior, where the probability mass is spread more evenly across classes. Consequently, an overly constrained network is structurally forced to exhibit higher entropy than the ground truth, resulting in unavoidable under-confidence.

    \subsection{The Robustness-Accuracy-Calibration trade-off}
    \label{subsec:the_robustness_accuracy_calibration_trade_off}

    Although $T$ and $L$ have similar impact on calibration, they differ fundamentally in their effect on robustness. Temperature Scaling is a post-hoc operation that does not alter the feature space geometry. As shown in~\Cref{eq:invariance}, applying TS scales both the margin $\mathcal{M}$ and the Lipschitz constant $L$ by the same factor $1/T$, leaving the certified radius $R_\mathrm{cert}$ invariant:
    \begin{equation}\label{eq:invariance}
        R_{\mathrm{cert}}^{\Tilde{f}}(\mathbf{x}) = \frac{\mathcal{M}(f(\mathbf{x})/T)}{\sqrt{2}(L/T)} = \frac{\mathcal{M}(f(\mathbf{x}))}{\sqrt{2}L} = R_{\mathrm{cert}}^{f}(\mathbf{x}).
    \end{equation}
    Consequently, TS can calibrate a model but cannot improve its robustness. To achieve both, one must actively constrain $L$ during optimization.
    
    Constraining $L$ allows us to navigate the Accuracy-Robustness Pareto front, where a low $L$ favors robustness at the expense of accuracy, while a high $L$ maximizes performance but increases sensitivity to perturbations~\citep{bethune_pay_2022}. We argue that the ``calibrated" Lipschitz constant $L^{*}$ (where $T \simeq 1$) does not lie arbitrarily on this frontier. Instead, calibration acts as an unsupervised criterion to identify a principled equilibrium between under-fitting (over-constrained) and instability (under-constrained). As we will illustrate in the experiments (\Cref{fig:pareto_analysis}a), this criterion selects a non-trivial operating point without requiring manual tuning of the trade-off parameter $L$.

    In summary, our analysis supports the shift towards a dynamic training strategy. First, the functional duality between $L$ and $T$ implies that calibration can be achieved by optimizing the Lipschitz constant directly. Second, unlike Temperature Scaling, constraining $L$ yields certified robustness, making it a superior control variable. Finally, since the optimal constant $L^{*}$ is intractable \textit{a priori} due to its impact on feature learning, it cannot be fixed statically. These findings motivate our proposed algorithm, detailed in the next section, which dynamically targets this optimal constant to simultaneously achieve calibration and certified robustness.

    \section{LiST: Lipschitz Scaling Training}
    \label{sec:list_lipschitz_scaling_training}

    Building on the duality established in Section~\ref{sec:the_lipschitz_temperature_duality}, LiST automates the search for $L^*$ through a two-phase training procedure.

    \subsection{Algorithm Overview}
    \label{subsec:algorithm_overview}
    
    \paragraph{Phase I: Dynamic Search.}
    Phase I treats the Lipschitz constraint as a dynamic variable rather than a fixed hyperparameter. At each epoch, after a standard weight update, the calibration temperature $T_e^*$ is computed on a held-out calibration set $\mathcal{D}_{\text{cal}}$ and the constraint is updated according to the rule: 
    \begin{equation}
        L_{e+1} \leftarrow L_e / T^*_e.
    \end{equation}
    This rule implements a negative feedback loop: over-confidence ($T_e^* > 1$) tightens the constraint, while under-confidence ($T_e^* < 1$) relaxes it. The system naturally converges to the equilibrium $T^* = 1$, at which point the constraint has stabilized at the intrinsic value $L^*$, a principled operating point on the accuracy-robustness Pareto front, selected without any manual tuning.
    
    \paragraph{Phase II: Data Re-integration.}
    Phase II exploits this convergence. With the constraint frozen at $L^*$, the calibration data $\mathcal{D}_{\mathrm{cal}}$, no longer needed as a feedback signal, is re-integrated into training: the network is fine-tuned on $\mathcal{D}_{\mathrm{total}} = \mathcal{D}_{\mathrm{train}} \cup \mathcal{D}_{\mathrm{cal}}$. The theoretical justification for this re-integration is discussed in Section~\ref{subsec:stopping_criterion_and_data_reintegration}.
    
    Detailed pseudocode for both phases is provided in~Appendix~\ref{app:algorithm}.

    \subsection{Convergence Dynamics}
    \label{subsec:convergence_dynamics}
    
    The dynamics of Phase~I, illustrated in~\Cref{fig:training_dynamics}, are characterized by two successive regimes driven by the interplay between the Lipschitz constraint and the calibration temperature.
    \begin{figure}[t]
        \centering
        \includegraphics[width=\linewidth]{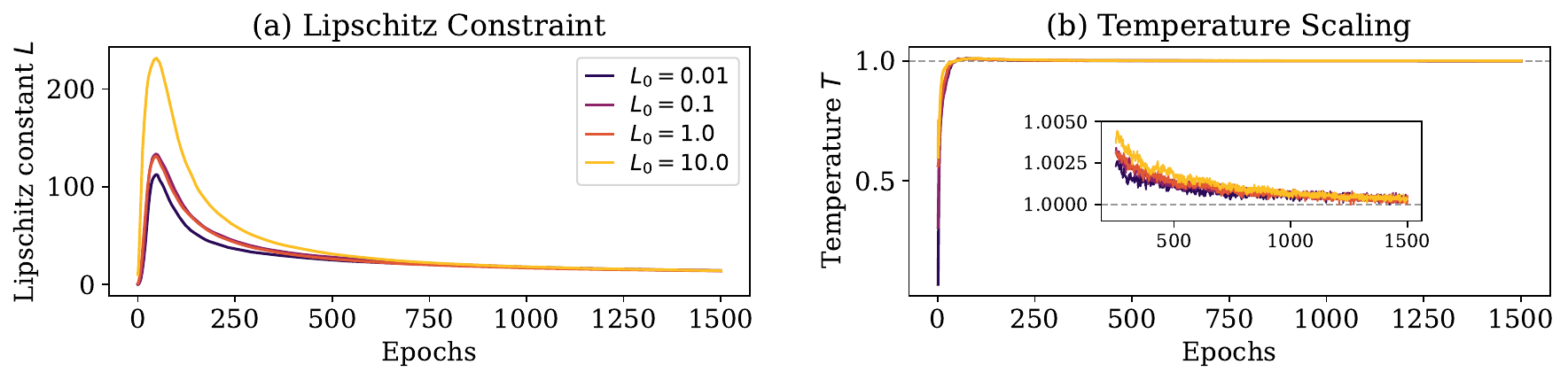}
        \caption{
        %\textbf{Training dynamics of LiST (Phase~I).} (a) Evolution of $L$ for four initializations $L_0$, converging to the same $L^*$. (b) Corresponding calibration temperature $T$, converging to $T=1$.
        \textbf{Training dynamics of LiST (Phase~I).} (a) Evolution of the Lipschitz constraint $L$ for four initializations $L_0$. Regardless of $L_0$, the algorithm first relaxes the constraint to gain expressivity (\textit{relaxation regime}), then progressively tightens it as the model becomes over-confident (\textit{contraction regime}), converging to the same intrinsic value $L^*$. (b) Corresponding evolution of the calibration temperature $T$. The inset highlights the asymptotic convergence towards the target equilibrium $T = 1$.
        }
        \label{fig:training_dynamics}
    \end{figure}
    At initialization, randomly initialized weights produce high-entropy, low-confidence predictions ($T < 1$). The feedback rule $L_{e+1} \leftarrow L_e / T^*_e$ interprets this under-confidence as insufficient model capacity and responds by relaxing the constraint, causing $L_e$ to increase rapidly. This \textit{relaxation regime} acts as an automatic structural warm-up: the network progressively acquires the expressivity needed to fit the training data without any manual capacity scheduling. Once the model begins to learn discriminative features, predictions sharpen and over-confidence emerges ($T > 1$), triggering the \textit{contraction regime}: the constraint is tightened at each epoch ($L_{e+1} < L_e$), reducing model capacity until the system reaches the calibrated equilibrium $T \approx 1$. At convergence, the constraint has stabilized at $L^*$, the tightest value compatible with calibration.
    
    Beyond this convergence behavior, \Cref{fig:training_dynamics}a shows that the feedback loop is robust to the choice of $L_0$. Despite initializations spanning three orders of magnitude ($L_0 \in \{0.01, 0.1, 1.0, 10.0\}$), the final constraints converge to a narrow range, with a relative spread of less than $3\%$ on CIFAR-10. This confirms that $L^*$ is an intrinsic property of the task and architecture rather than a hyperparameter, effectively eliminating the grid search over $L$ required by static Lipschitz baselines.
    
    % A direct consequence of this non-stationary dynamics is that standard learning rate schedulers (e.g., Cosine Decay) become ineffective: since $L$ directly scales the network's activations and gradients throughout both regimes, a fixed schedule may decay the learning rate prematurely or fail to adapt to the evolving gradient scale. We therefore employ the ScheduleFree optimizer~\citep{defazio_road_2024}, which removes the need for a rigid schedule and adapts robustly to the changing scale of the problem, ensuring stability from the rapid expansion of the relaxation regime to the fine-grained adjustments of the contraction regime.

    \subsection{Stopping Criterion and Data Re-Integration}
    \label{subsec:stopping_criterion_and_data_reintegration}

    Defining an optimal stopping criterion for LiST requires addressing specific dynamics absent in standard training. First, relying on a fixed epoch budget is prone to sub-optimality: since CRA relies on the ratio between margins and the Lipschitz constant, and LiST refines $L$ asymptotically, an arbitrary cutoff often terminates training before the maximal certified radius is achieved. Second, monitoring validation loss is counter-productive; cross-entropy minimizes prediction error but does not explicitly incentivize the large margins required for robustness. Consequently, the ``robustification'' phase often triggers a stagnation or slight increase in validation loss, leading to premature stopping. Third, a criterion based solely on temperature convergence (i.e., stopping when $T_{t} \approx 1$) is inconsistent due to scale invariance: a small deviation (e.g., $T=0.99$) induces a negligible update when $L$ is small, but a massive shift when $L$ is large. Therefore, we adopt the stability of the Lipschitz constant as the most coherent criterion. Specifically, training is halted when the relative range of $L_{e}$ over a sliding window $\mathcal{W}=\left[L_t\right]_{e-W< t \leq e}$ of size $W$ falls below a tolerance $\varepsilon$ (i.e., $(\max(\mathcal{W}) - \min(\mathcal{W})) / \text{mean}(\mathcal{W}) < \varepsilon$), ensuring that the network has reached its true structural equilibrium.

    Identifying this equilibrium also resolves the data-splitting limitation. Once the constraint is frozen at $L^*$, the network remains in the function class $\mathcal{F}_{L^*}$ of $L^*$-Lipschitz functions throughout fine-tuning. Standard generalization bounds for norm-controlled deep networks \citep{bartlett_spectrally-normalized_2017, neyshabur2018pac, golowich2018size} establish that the Rademacher complexity of such classes scales as $\mathcal{O}(L^* / \sqrt{N})$, where $N$ is the sample size. Re-integrating $\mathcal{D}_{\mathrm{cal}}$ into training therefore strictly tightens the generalization gap by a factor $\sqrt{n / (n + m)}$ (where $n = |\mathcal{D}_{\mathrm{train}}|$ and $m = |\mathcal{D}_{\mathrm{cal}}|$), without enlarging the underlying capacity class. While this argument does not by itself guarantee improved test performance, it ensures that Phase~II cannot harm generalization: any improvement observed empirically (Table~\ref{tab:low_data_and_datasets}, bottom) reflects a tighter bound combined with the additional supervision signal provided by $\mathcal{D}_{\mathrm{cal}}$.
    % Once $L^*$ is reached, the network's capacity is strictly bounded. While overfitting remains theoretically possible, the number of epochs required to memorize data grows exponentially as capacity is restricted, effectively neutralizing the risk within a practical training horizon. Since the calibration and training sets are drawn i.i.d from the same underlying distribution (\Cref{ass:iid}), we safely introduce a final recovery phase (\textbf{Phase~II}): the constraint is \textbf{frozen} ($L \leftarrow L^*$) and the network is fine-tuned on the combined dataset. Fixing $L$ prevents structural drift, and~\Cref{thm:phase2} formally guarantees that this re-integration strictly tightens the generalization bound by a factor $\sqrt{n/N}$, while the capacity class $\mathcal{F}_{L^*}$ remains unchanged.

    \section{Experiments}
    \label{sec:experiments}

        \subsection{Experimental Setup}
        \label{subsec:experimental_setup}
    
        We compare LiST against two families of baselines. For unconstrained networks, we consider standard cross-entropy training as a reference for clean accuracy, Label Smoothing~\citep{muller_when_2019} and Focal Loss~\citep{mukhoti_calibrating_2020} as methods targeting overconfidence reduction, and PGD-AT~\citep{madry_towards_2018} and CAAT~\citep{obadinma_calibration_2024} as adversarially robust training procedures. For Lipschitz-constrained networks, we compare against fixed-$L$ baselines with $L \in \{1, 2, 4, 8, 16, 32, 64, 128\}$, collectively tracing the accuracy-robustness Pareto front. Rather than positioning LiST as a direct competitor to any of these methods, our goal is to show that it achieves competitive levels across all three axes simultaneously; clean accuracy, calibration, and robustness, and all baselines are therefore evaluated without post-hoc calibration, so as to assess the intrinsic calibration of each training method. CRA is not reported for unconstrained networks, as it requires a provable Lipschitz bound.
        
        We evaluate on CIFAR-10, CIFAR-100, and Tiny-ImageNet. LiST uses a stratified 90/10 train/calibration split, while all baselines train on the full training set. The official Tiny-ImageNet validation set serves as test set. All models use a ResNet-18 adapted to each dataset's resolution: unconstrained baselines use a standard ResNet-18, while Lipschitz-constrained models share the architecture detailed in~Appendix~\ref{app:implementation}. Full training details are provided in~Appendix~\ref{app:implementation}.
        
        LiST is initialized with $L_0 = 1$, which is justified by the convergence analysis of Section~\ref{subsec:convergence_dynamics} showing that the algorithm reliably reaches $L^*$ regardless of initialization. The Phase~I stopping criterion uses a sliding window of $W = 30$ epochs with tolerance $\varepsilon = 0.001$. Phase~II fine-tunes on $\mathcal{D}_{\mathrm{total}} = \mathcal{D}_{\mathrm{train}} \cup \mathcal{D}_{\mathrm{cal}}$ for 100 additional epochs across all datasets.
        
        We report clean accuracy, ECE and ESCE for calibration, AutoAttack accuracy~\citep{croce_reliable_2020} ($\ell_2$, $\varepsilon = 0.5$), and CRA ($\ell_2$, $\varepsilon = 36/255$), following the respective community standards. Calibration is also assessed qualitatively via Reliability Diagrams in~Appendix \ref{app:qualitative_analysis}. The choice of evaluation budgets is discussed in~Appendix~\ref{app:implementation}.
    
        % \franck{Proposition d organisation des results
        % \subsection{Robustness-Accuracy-Calibration trade-off}
        % Lower part (Lipschitz)  of \cref{tab:main_results} and \cref{fig:pareto_analysis} (left part), compare several fixed Lipschitz constraint $L \in \{1, 2, 4, 8, 16, 32, 64, 128\}$, with the proposed  solution LiST for $\xi=0$. This show that the LiST ...on the pareto front neither the most robust,....
    
        % Last line of \cref{tab:main_results} and \cref{fig:pareto_analysis} (left part), demonstrate that List decouples calibrattion.... withthe $\xi$ parameter...
        
        % \subsection{Comparison with previous work}
        % Top part of \cref{tab:main_results} compare LiST againt unconstrained ......
        % Ajouter si possible des chiffres pour Cifar100 et Tiny Imagenet
        
        % \subsection{LiST validity on low data regime}
        
        % Table 2 top part (mais il faut pas la partie bottom) 
        % \subsection{Ablation study}
        % Table 2 bottom part ds une table 3
        % }
        
        \subsection{Results}
        \label{subsec:results}
    
        Table~\ref{tab:main_results} compares LiST against unconstrained and Lipschitz-constrained baselines on CIFAR-10. The first key observation is that LiST achieves the strongest calibration across all methods, with an ECE of $0.79\%$, well below both unconstrained baselines (down to $1.68\%$ for Focal Loss) and the best fixed-$L$ baseline ($L=32$, ECE $1.60\%$). Crucially, this calibration is obtained out-of-the-box, without any post-hoc correction. The second observation concerns the position of LiST on the accuracy-robustness Pareto front: with $\xi = 0$, LiST converges to $L^* \simeq 34.66$, yielding a model that is neither the most accurate nor the most robust, but sits at a natural, principled equilibrium between the two objectives, as illustrated in Figure~\ref{fig:pareto_analysis}a. This confirms that calibration acts as an unsupervised criterion to select a non-trivial operating point, without any manual tuning of $L$. Finally, by increasing the loss offset to $\xi = 3$, LiST moves along the Pareto front towards higher robustness (AA Acc.\ $53.04\%$, CRA $63.06\%$), while remaining well-calibrated (ECE $1.17\%$). This demonstrates that LiST decouples calibration from the accuracy-robustness trade-off: the user can freely navigate this trade-off via $\xi$, while calibration is preserved throughout — as shown in Figure~\ref{fig:pareto_analysis}b.
        \begin{table*}[t]
            \centering 
            \footnotesize
            \caption{
            % \textbf{Main results on CIFAR-10.} Comparison of LiST against unconstrained and Lipschitz-constrained baselines. Results are $\text{mean}_{\pm \text{std}}$ over 3 runs; best in \textbf{bold}, second-best \underline{underlined}.
            \textbf{LiST improves calibration while remaining competitive in robustness on CIFAR-10.} We compare LiST to unconstrained baselines trained with different objectives and to fixed-$L$ Lipschitz models (subsampled). LiST achieves the strongest calibration overall, and tuning the offset $\xi$ moves the model along the accuracy-robustness Pareto front while preserving low calibration error. Results are reported as $\text{mean}_{\pm \text{std}}$ over 3 runs. Best are in \textbf{bold}; second-best are \underline{underlined}.
            }
            \label{tab:main_results}
            \tabcolsep=0.11cm
            \begin{tabular}{@{}clccccc@{}}
                \toprule 
                & \textbf{Method} & \textbf{Clean Acc.} ($\uparrow$) & \textbf{ECE} ($\downarrow$) & \textbf{ESCE} ($\rightarrow 0$) & \textbf{AA Acc.} ($\uparrow, \varepsilon = 0.5$) & \textbf{CRA} ($\uparrow, \varepsilon = 36/255$) \\
                \midrule 
                \multirow{5}{*}{\rotatebox{90}{Unconst.}}
                & Standard      & $92.95_{\pm 0.20}$ & $4.72_{\pm 0.24}$ & $4.70_{\pm 0.25}$ & $0.28_{\pm 0.01}$ & N/A \\
                & Label Smooth. & $\mathbf{93.11}_{\pm 0.15}$ & $6.00_{\pm 0.15}$ & $-4.81_{\pm 0.16}$ & $0.57_{\pm 0.16}$ & N/A \\
                & Focal Loss    & $\underline{93.08}_{\pm 0.12}$ & $1.68_{\pm 0.08}$ & $1.63_{\pm 0.10}$ & $0.13_{\pm 0.02}$ & N/A \\
                & PGD-AT        & $86.99_{\pm 0.22}$ & $7.67_{\pm 0.24}$ & $7.63_{\pm 0.25}$ & $\mathbf{61.88}_{\pm 0.14}$ & N/A \\
                & CAAT          & $88.83_{\pm 0.40}$ & $8.30_{\pm 0.36}$ & $8.29_{\pm 0.36}$ & $\underline{58.08}_{\pm 0.78}$ & N/A \\
                \cmidrule(lr){1-7}
                \multirow{7}{*}{\rotatebox{90}{Lipschitz}}
                & $L = 1$   & $63.14_{\pm 0.06}$ & $23.02_{\pm 0.03}$ & $-23.02_{\pm 0.03}$ & $47.45_{\pm 0.17}$ & $55.89_{\pm 0.09}$ \\
                % & $L = 2$   & $71.77_{\pm 0.18}$ & $19.09_{\pm 0.19}$ & $-19.09_{\pm 0.19}$ & $50.76_{\pm 0.12}$ & $60.69_{\pm 0.07}$ \\
                & $L = 4$   & $78.17_{\pm 0.15}$ & $13.46_{\pm 0.11}$ & $-13.45_{\pm 0.11}$ & $51.80_{\pm 0.24}$ & $\underline{62.12}_{\pm 0.27}$ \\
                % & $L = 8$   & $82.55_{\pm 0.09}$ & $7.91_{\pm 0.15}$ & $-7.90_{\pm 0.15}$ & $50.08_{\pm 0.61}$ & $58.23_{\pm 0.30}$ \\
                & $L = 16$  & $85.49_{\pm 0.15}$ & $3.21_{\pm 0.24}$ & $-3.18_{\pm 0.24}$ & $46.14_{\pm 0.28}$ & $47.20_{\pm 0.27}$ \\
                & $L = 32$  & $86.73_{\pm 0.22}$ & $1.60_{\pm 0.12}$ & $1.48_{\pm 0.18}$ & $41.93_{\pm 0.13}$ & $30.83_{\pm 0.15}$ \\
                % & $L = 64$  & $87.76_{\pm 0.04}$ & $4.26_{\pm 0.10}$ & $4.24_{\pm 0.07}$ & $37.15_{\pm 0.47}$ & $12.32_{\pm 0.26}$ \\
                & $L = 128$ & $88.55_{\pm 0.10}$ & $5.51_{\pm 0.05}$ & $5.51_{\pm 0.05}$ & $32.76_{\pm 0.45}$ & $0.77_{\pm 0.04}$ \\
                \cmidrule(lr){2-7}
                & \textbf{LiST ($\xi = 0.$)} & $86.59_{\pm 0.28}$ & $\mathbf{0.79}_{\pm 0.16}$ & $\underline{-0.15}_{\pm 0.30}$ & $43.29_{\pm 1.08}$ & $34.52_{\pm 1.76}$ \\
                & \textbf{LiST ($\xi = 3.$)} & $76.93_{\pm 0.20}$ & $\underline{1.17}_{\pm 0.21}$ & $\mathbf{0.11}_{\pm 0.13}$ & $53.04_{\pm 0.15}$ & $\mathbf{63.06}_{\pm 0.14}$ \\
                \bottomrule
            \end{tabular}
        \end{table*}
        \begin{figure}[t]
            \centering
            \includegraphics[width=\linewidth]{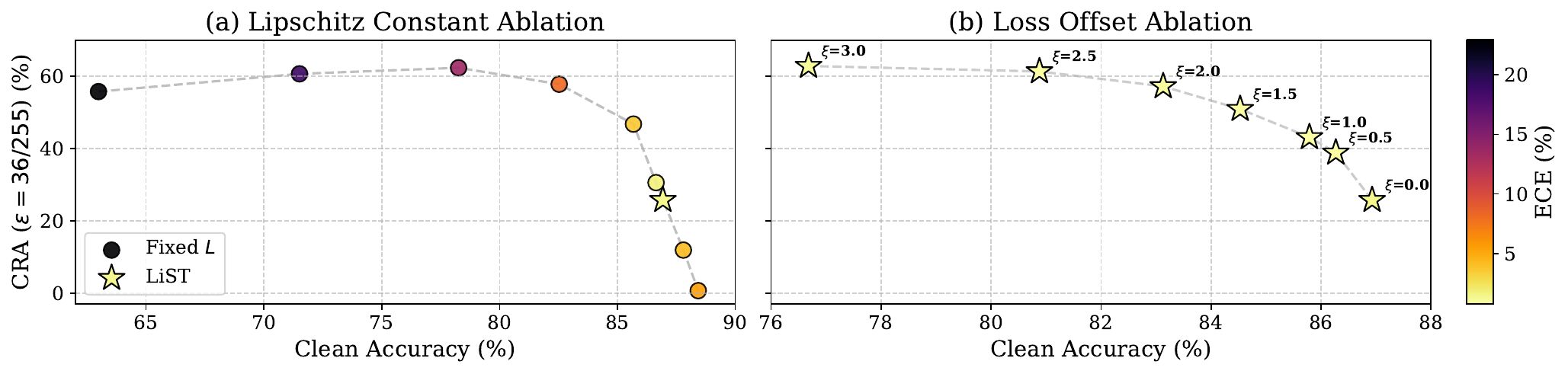}
            \caption{
            \textbf{Pareto Analysis on CIFAR-10.} (Left) LiST naturally converges to the calibrated point on the Pareto front. (Right) Varying $\xi$ yields a fully calibrated Pareto front using $100\%$ of training data.
            % \textbf{Pareto Analysis on CIFAR-10.} (Left) For a fixed offset $\xi = 0$, LiST converges to the point of the Pareto front that achieve calibration naturally. (Right) The offset enables to recreate a ``calibrated Pareto front'', and thus obtain highly robust and calibrated models that use 100\% of the training data.
            }
            \label{fig:pareto_analysis}
        \end{figure}

        The top block of Table~\ref{tab:low_data_and_datasets} reports LiST performance on CIFAR-10 as the training set size decreases from $N=50{,}000$ to $N=1{,}000$. Across all data scales, LiST consistently achieves low ECE, ranging from $1.27\%$ at full scale to $2.62\%$ in the extreme low-data regime. Importantly, this calibration is achieved without consuming any calibration data for post-hoc correction; $\mathcal{D}_{\mathrm{cal}}$ is used solely as a training signal. Notably, as $N$ decreases, the converged $L^*$ also decreases, reflecting the reduced task complexity. Yet the algorithm always identifies a non-trivial, calibrated operating point, confirming that LiST is robust to data scarcity.
        \begin{table*}[t]
            \centering
            \footnotesize
            \caption{
            \textbf{Ablations studies.} (Top) LiST on CIFAR-10 under varying training set sizes $N$. (Bottom) Phase~I vs. Phase~II comparison across datasets.
            % \textbf{LiST remains well calibrated across data regimes and benefits from Phase~II across datasets.} The top block reports LiST performance on CIFAR-10 as the number of training examples $N$ decreases. The bottom block compares Phase~I and Phase~II across datasets. It yields consistent improvements on accuracy and robustness while preserving strong calibration.
            }
            \label{tab:low_data_and_datasets}
            \tabcolsep=0.11cm
            \begin{tabular}{ccccccccc}
                \toprule
                \multirow{2}{*}{\textbf{Setting}} & \multirow{2}{*}{\textbf{Phase}} & \multirow{2}{*}{$\mathbf{L^*}$} & \multirow{2}{*}{\textbf{Clean Acc.} $(\uparrow)$} & \multirow{2}{*}{\textbf{ECE} $(\downarrow)$} & \multirow{2}{*}{\textbf{ESCE} $(\rightarrow 0)$} & \textbf{AA Acc.} & \textbf{CRA} \\
                 &  &  &  &  &  & $(\uparrow, \varepsilon = 0.5)$ & $(\uparrow, \varepsilon = 36/255)$ \\
                \midrule
                \multicolumn{8}{c}{\textit{Low-data regimes ablation}} \\
                \midrule
                $N=50000$ & \multirow{5}{*}{II} & $34.66$ & $86.89$ & $1.27$ & $1.25$ & $41.51$ & $25.78$ \\
                $N=25000$ &                     & $27.10$ & $83.70$ & $1.31$ & $1.20$ & $42.42$ & $30.11$ \\
                $N=10000$ &                     & $15.71$ & $77.36$ & $1.52$ & $0.69$ & $41.12$ & $37.58$ \\
                $N=5000$  &                     & $12.88$ & $71.74$ & $2.62$ & $2.59$ & $39.60$ & $37.42$ \\
                $N=1000$  &                     & $5.20$  & $52.53$ & $2.16$ & $0.99$ & $33.45$ & $30.06$ \\
                \midrule
                \multicolumn{8}{c}{\textit{Dataset and Phase II ablation}} \\
                \midrule
                \multirow{2}{*}{CIFAR10} & I & \multirow{2}{*}{$34.66$} & $86.87$ & $1.34$ & $\mathbf{1.18}$ & $41.49$ & $25.44$ \\
                % &  & \textcolor{green!75!black}{$(+0.02)$} & \textcolor{green!75!black}{$(-0.07)$} & \textcolor{red!90!black}{$(+0.07)$} & \textcolor{green!75!black}{$(+0.02)$} & \textcolor{green!75!black}{$(+0.34)$} \\
                 & II &  & $\mathbf{86.89}$ & $\mathbf{1.27}$ & $1.25$ & $\mathbf{41.51}$ & $\mathbf{25.78}$ \\
                \cmidrule(lr){1-8}
                \multirow{2}{*}{CIFAR-100} & I & \multirow{2}{*}{$24.96$} & $57.04$ & $\mathbf{1.52}$ & $-0.82$ & $21.20$ & $10.07$ \\
                % &  & \textcolor{green!75!black}{$(+0.29)$} & \textcolor{red!90!black}{$(+0.09)$} & \textcolor{green!75!black}{$(+0.02)$} & \textcolor{green!75!black}{$(+0.10)$} & \textcolor{green!75!black}{$(+0.32)$} \\
                 & II &  & $\mathbf{57.33}$ & $1.61$ & $\mathbf{-0.80}$ & $\mathbf{21.30}$ & $\mathbf{10.39}$ \\
                \cmidrule(lr){1-8}
                \multirow{2}{*}{Tiny-ImageNet} & I & \multirow{2}{*}{$14.72$} & $42.24$ & $2.42$ & $-2.19$ & $21.17$ & $12.34$ \\
                % &  & \textcolor{green!75!black}{$(+0.52)$} & \textcolor{green!75!black}{$(-0.31)$} & \textcolor{green!75!black}{$(+0.31)$} & \textcolor{green!75!black}{$(+0.33)$} & \textcolor{green!75!black}{$(+0.58)$} \\
                 & II &  & $\mathbf{42.76}$ & $\mathbf{2.11}$ & $\mathbf{-1.88}$ & $\mathbf{21.50}$ & $\mathbf{12.92}$ \\
                \bottomrule
            \end{tabular}
        \end{table*}

        The bottom block of Table~\ref{tab:low_data_and_datasets} reports LiST performance before and after Phase~II on CIFAR-10, CIFAR-100, and Tiny-ImageNet. Across all three datasets, LiST reaches a well-calibrated state at the end of Phase~I, with ECE below $2.42\%$ in all cases. Phase~II consistently improves clean accuracy and robustness across all datasets, with gains up to $+0.52\%$ in accuracy and $+0.58\%$ in CRA on Tiny-ImageNet, while maintaining calibration in a comparable range. These improvements are consistent with the theoretical guarantee of Section~\ref{subsec:stopping_criterion_and_data_reintegration}: re-integrating $\mathcal{D}_{\mathrm{cal}}$ into training strictly tightens the generalization bound without altering model capacity.
        
        \subsection{Limitations}
        \label{subsec:limitations}
    
        While LiST significantly simplifies the training pipeline by eliminating the need for a grid search over the Lipschitz constant, several limitations remain. First, the existence of $L^*$ is guaranteed by~Proposition~\ref{prop:existence_L_star}, but its uniqueness is not, and while we conjecture that a unique non-trivial L* exists under mild regularity conditions, this remains an open theoretical question. Furthermore, $L^*$ depends on the full training scheme (architecture, data augmentation, optimizer) and cannot be transferred across settings; however, this is no worse than the existing practice of grid-searching over $L$, which LiST precisely replaces. Finally, the calibration criterion selects a principled operating point on the accuracy-robustness Pareto front, which does not necessarily maximize certified robustness. In such cases, the loss offset $\xi$ already present in prior Lipschitz training works can be used to navigate the trade-off, so that LiST effectively reduces the hyperparameter burden from two ($L$ and $\xi$) to one ($\xi$ alone).

    \section{Conclusion}
    \label{sec:conclusion}

        Accuracy, robustness, and calibration are three pillars of trustworthy machine learning, yet they have long been studied in isolation. In this work, we have shown that these objectives are more deeply connected than previously recognized. By formalizing the structural duality between the Lipschitz constant and Temperature Scaling, we established that calibration acts as a principled, unsupervised criterion to identify a natural operating point on the accuracy-robustness Pareto front. Building on this insight, LiST automates the search for this operating point, yielding models that are simultaneously certifiably robust and calibrated out-of-the-box, without any manual tuning of the Lipschitz constant. Furthermore, by varying the loss offset $\xi$, LiST enables the construction of a fully calibrated Pareto front, decoupling calibration from the accuracy-robustness trade-off. Finally, Phase II demonstrates that once $L^*$ is identified, the calibration data can be safely reintegrated into training, improving sample efficiency at no cost to calibration.
        
        Several directions remain open. The theoretical uniqueness of $L^*$ is an interesting open question, and extending LiST to other architectures, modalities, or tasks such as out-of-distribution detection would be a natural next step. More broadly, we believe that using calibration as a training signal, rather than a post-hoc correction, opens a promising avenue toward more principled and trustworthy deep learning.

    %\ifarxiv
    %\if false %\anonymous
    
    \section*{Acknowledgments}
    
    Our work has benefited from the AI Cluster ANITI and the research program DEEL.\footnote{\url{https://www.deel.ai/}} ANITI is funded by the France 2030 program under the Grant agreement n°ANR-23-IACL-0002. DEEL is an integrative program of the AI Cluster ANITI, designed and operated jointly with IRT Saint Exupéry, with the financial support from its industrial and academic partners and the France 2030 program under the Grant agreement n°ANR-10-AIRT-01.
    
    \noindent This work was granted access to the HPC resources of IDRIS under the allocation 2025-AD011017289 made by GENCI. 
    %\fi

    \bibliographystyle{plainnat}  %ieeenat_fullname
    \bibliography{bibliography}

\begin{thebibliography}{53}
\providecommand{\natexlab}[1]{#1}
\providecommand{\url}[1]{\texttt{#1}}
\expandafter\ifx\csname urlstyle\endcsname\relax
  \providecommand{\doi}[1]{doi: #1}\else
  \providecommand{\doi}{doi: \begingroup \urlstyle{rm}\Url}\fi

\bibitem[Angelopoulos and Bates(2023)]{angelopoulos_conformal_2023}
Anastasios~N. Angelopoulos and Stephen Bates.
\newblock Conformal {Prediction}: {A} {Gentle} {Introduction}.
\newblock \emph{Foundations and Trends in Machine Learning}, 16\penalty0
  (4):\penalty0 494--591, March 2023.
\newblock ISSN 1935-8237.
\newblock \doi{10.1561/2200000101}.
\newblock URL \url{https://doi.org/10.1561/2200000101}.

\bibitem[Anil et~al.(2019)Anil, Lucas, and Grosse]{anil_sorting_2019}
Cem Anil, James Lucas, and Roger Grosse.
\newblock Sorting {Out} {Lipschitz} {Function} {Approximation}.
\newblock In \emph{Proceedings of the 36th {International} {Conference} on
  {Machine} {Learning}}, pages 291--301. PMLR, May 2019.
\newblock URL \url{https://proceedings.mlr.press/v97/anil19a.html}.

\bibitem[Araujo et~al.(2023)Araujo, Havens, Delattre, Allauzen, and
  Hu]{araujo2023unified}
Alexandre Araujo, Aaron~J Havens, Blaise Delattre, Alexandre Allauzen, and Bin
  Hu.
\newblock A unified algebraic perspective on lipschitz neural networks.
\newblock In \emph{The Eleventh International Conference on Learning
  Representations}, 2023.

\bibitem[Bartlett et~al.(2017)Bartlett, Foster, and
  Telgarsky]{bartlett_spectrally-normalized_2017}
Peter~L Bartlett, Dylan~J Foster, and Matus~J Telgarsky.
\newblock Spectrally-normalized margin bounds for neural networks.
\newblock In \emph{Advances in {Neural} {Information} {Processing} {Systems}},
  volume~30. Curran Associates, Inc., 2017.
\newblock URL
  \url{https://proceedings.neurips.cc/paper_files/paper/2017/hash/b22b257ad0519d4500539da3c8bcf4dd-Abstract.html}.

\bibitem[Boissin et~al.(2025)Boissin, Mamalet, Fel, Picard, Massena, and
  Serrurier]{boissin2025adaptive}
Thibaut Boissin, Franck Mamalet, Thomas Fel, Agustin~Martin Picard, Thomas
  Massena, and Mathieu Serrurier.
\newblock An adaptive orthogonal convolution scheme for efficient and flexible
  cnn architectures.
\newblock In \emph{International Conference on Machine Learning}, pages
  4757--4790. PMLR, 2025.

\bibitem[Boyd and Vandenberghe(2023)]{boyd_convex_2023}
Stephen~P. Boyd and Lieven Vandenberghe.
\newblock \emph{Convex optimization}.
\newblock Cambridge University Press, Cambridge New York Melbourne New Delhi
  Singapore, version 29 edition, 2023.
\newblock ISBN 978-0-521-83378-3.

\bibitem[Brier(1950)]{brier_verification_1950}
Glenn~W. Brier.
\newblock {VERIFICATION} {OF} {FORECASTS} {EXPRESSED} {IN} {TERMS} {OF}
  {PROBABILITY}.
\newblock \emph{Monthly Weather Review}, 78\penalty0 (1):\penalty0 1--3,
  January 1950.
\newblock ISSN 1520-0493, 0027-0644.
\newblock \doi{10.1175/1520-0493(1950)078<0001:VOFEIT>2.0.CO;2}.
\newblock URL
  \url{https://journals.ametsoc.org/view/journals/mwre/78/1/1520-0493_1950_078_0001_vofeit_2_0_co_2.xml}.

\bibitem[Béthune et~al.(2022)Béthune, Boissin, Serrurier, Mamalet, Friedrich,
  and Gonzalez~Sanz]{bethune_pay_2022}
Louis Béthune, Thibaut Boissin, Mathieu Serrurier, Franck Mamalet, Corentin
  Friedrich, and Alberto Gonzalez~Sanz.
\newblock Pay attention to your loss : understanding misconceptions about
  {Lipschitz} neural networks.
\newblock \emph{Advances in Neural Information Processing Systems},
  35:\penalty0 20077--20091, December 2022.
\newblock URL
  \url{https://papers.nips.cc/paper_files/paper/2022/hash/7eb3d8ae592966543170a65e6b698828-Abstract-Conference.html}.

\bibitem[Croce and Hein(2020)]{croce_reliable_2020}
Francesco Croce and Matthias Hein.
\newblock Reliable evaluation of adversarial robustness with an ensemble of
  diverse parameter-free attacks.
\newblock In \emph{Proceedings of the 37th {International} {Conference} on
  {Machine} {Learning}}, pages 2206--2216. PMLR, November 2020.
\newblock URL \url{https://proceedings.mlr.press/v119/croce20b.html}.

\bibitem[Defazio et~al.(2024)Defazio, Yang, Mehta, Mishchenko, Khaled, and
  Cutkosky]{defazio_road_2024}
Aaron Defazio, Xingyu Yang, Harsh Mehta, Konstantin Mishchenko, Ahmed Khaled,
  and Ashok Cutkosky.
\newblock The {Road} {Less} {Scheduled}.
\newblock In A.~Globerson, L.~Mackey, D.~Belgrave, A.~Fan, U.~Paquet,
  J.~Tomczak, and C.~Zhang, editors, \emph{Advances in {Neural} {Information}
  {Processing} {Systems}}, volume~37, pages 9974--10007. Curran Associates,
  Inc., 2024.
\newblock \doi{10.52202/079017-0320}.
\newblock URL
  \url{https://proceedings.neurips.cc/paper_files/paper/2024/file/136b9a13861308c8948cd308ccd02658-Paper-Conference.pdf}.

\bibitem[Dwork et~al.(2012)Dwork, Hardt, Pitassi, Reingold, and
  Zemel]{dwork_fairness_2012}
Cynthia Dwork, Moritz Hardt, Toniann Pitassi, Omer Reingold, and Richard Zemel.
\newblock Fairness through awareness.
\newblock In \emph{Proceedings of the 3rd {Innovations} in {Theoretical}
  {Computer} {Science} {Conference}}, {ITCS} '12, pages 214--226, New York, NY,
  USA, January 2012. Association for Computing Machinery.
\newblock ISBN 978-1-4503-1115-1.
\newblock \doi{10.1145/2090236.2090255}.
\newblock URL \url{https://dl.acm.org/doi/10.1145/2090236.2090255}.

\bibitem[Galil and El-Yaniv(2021)]{galil_disrupting_2021}
Ido Galil and Ran El-Yaniv.
\newblock Disrupting {Deep} {Uncertainty} {Estimation} {Without} {Harming}
  {Accuracy}.
\newblock In \emph{Advances in {Neural} {Information} {Processing} {Systems}},
  volume~34, pages 21285--21296. Curran Associates, Inc., 2021.
\newblock URL
  \url{https://proceedings.neurips.cc/paper/2021/hash/b1b20d09041289e6c3fbb81850c5da54-Abstract.html}.

\bibitem[Golowich et~al.(2018)Golowich, Rakhlin, and Shamir]{golowich2018size}
Noah Golowich, Alexander Rakhlin, and Ohad Shamir.
\newblock Size-independent sample complexity of neural networks.
\newblock In \emph{Conference on learning theory}, pages 297--299. PMLR, 2018.

\bibitem[Goodfellow et~al.(2015)Goodfellow, Shlens, and
  Szegedy]{goodfellow2015explaining}
Ian~J. Goodfellow, Jonathon Shlens, and Christian Szegedy.
\newblock Explaining and harnessing adversarial examples.
\newblock In \emph{International Conference on Learning Representations
  (ICLR)}, 2015.
\newblock URL \url{https://arxiv.org/abs/1412.6572}.

\bibitem[Grabinski et~al.(2022)Grabinski, Gavrikov, Keuper, and
  Keuper]{grabinski_robust_2022}
Julia Grabinski, Paul Gavrikov, Janis Keuper, and Margret Keuper.
\newblock Robust {Models} are less {Over}-{Confident}, December 2022.
\newblock URL \url{http://arxiv.org/abs/2210.05938}.
\newblock arXiv:2210.05938 [cs].

\bibitem[Grosse et~al.(2019)Grosse, Pfaff, Smith, and
  Backes]{grosse_limitations_2019}
Kathrin Grosse, David Pfaff, Michael~Thomas Smith, and Michael Backes.
\newblock The {Limitations} of {Model} {Uncertainty} in {Adversarial}
  {Settings}, November 2019.
\newblock URL \url{http://arxiv.org/abs/1812.02606}.
\newblock arXiv:1812.02606 [cs].

\bibitem[Guo et~al.(2017)Guo, Pleiss, Sun, and
  Weinberger]{guo_calibration_2017}
Chuan Guo, Geoff Pleiss, Yu~Sun, and Kilian~Q. Weinberger.
\newblock On {Calibration} of {Modern} {Neural} {Networks}.
\newblock In \emph{Proceedings of the 34th {International} {Conference} on
  {Machine} {Learning}}, pages 1321--1330. PMLR, July 2017.
\newblock URL \url{https://proceedings.mlr.press/v70/guo17a.html}.
\newblock ISSN: 2640-3498.

\bibitem[Hardt et~al.(2016)Hardt, Price, and Srebro]{hardt_equality_2016}
Moritz Hardt, Eric Price, and Nati Srebro.
\newblock Equality of {Opportunity} in {Supervised} {Learning}.
\newblock In \emph{Advances in {Neural} {Information} {Processing} {Systems}},
  volume~29. Curran Associates, Inc., 2016.
\newblock URL
  \url{https://proceedings.neurips.cc/paper_files/paper/2016/hash/6a9659feb1216f14f7384ba499518b38-Abstract.html}.

\bibitem[Krishnan and Tickoo(2020)]{krishnan_improving_2020}
Ranganath Krishnan and Omesh Tickoo.
\newblock Improving model calibration with accuracy versus uncertainty
  optimization.
\newblock In \emph{Advances in {Neural} {Information} {Processing} {Systems}},
  volume~33, pages 18237--18248. Curran Associates, Inc., 2020.
\newblock URL
  \url{https://proceedings.neurips.cc/paper_files/paper/2020/hash/d3d9446802a44259755d38e6d163e820-Abstract.html}.

\bibitem[Kull et~al.(2019)Kull, Perello~Nieto, Kängsepp, Silva~Filho, Song,
  and Flach]{kull_beyond_2019}
Meelis Kull, Miquel Perello~Nieto, Markus Kängsepp, Telmo Silva~Filho, Hao
  Song, and Peter Flach.
\newblock Beyond temperature scaling: {Obtaining} well-calibrated multi-class
  probabilities with {Dirichlet} calibration.
\newblock In \emph{Advances in {Neural} {Information} {Processing} {Systems}},
  volume~32. Curran Associates, Inc., 2019.
\newblock URL
  \url{https://proceedings.neurips.cc/paper/2019/hash/8ca01ea920679a0fe3728441494041b9-Abstract.html}.

\bibitem[Kumar et~al.(2018)Kumar, Sarawagi, and Jain]{kumar_trainable_2018}
Aviral Kumar, Sunita Sarawagi, and Ujjwal Jain.
\newblock Trainable {Calibration} {Measures} for {Neural} {Networks} from
  {Kernel} {Mean} {Embeddings}.
\newblock In \emph{Proceedings of the 35th {International} {Conference} on
  {Machine} {Learning}}, pages 2805--2814. PMLR, July 2018.
\newblock URL \url{https://proceedings.mlr.press/v80/kumar18a.html}.

\bibitem[Lai et~al.(2025)Lai, Huang, Kung, and Chen]{lai2025enhancing}
Bo-Han Lai, Pin-Han Huang, Bo-Han Kung, and Shang-Tse Chen.
\newblock Enhancing certified robustness via block reflector orthogonal layers
  and logit annealing loss.
\newblock In \emph{International Conference on Machine Learning (ICML)}, 2025.
\newblock Spotlight.

\bibitem[Ledda et~al.(2025)Ledda, Scodeller, Angioni, Piras, Cinà, Fumera,
  Biggio, and Roli]{ledda_robustness_2025}
Emanuele Ledda, Giovanni Scodeller, Daniele Angioni, Giorgio Piras,
  Antonio~Emanuele Cinà, Giorgio Fumera, Battista Biggio, and Fabio Roli.
\newblock On the {Robustness} of {Adversarial} {Training} {Against}
  {Uncertainty} {Attacks}, May 2025.
\newblock URL \url{http://arxiv.org/abs/2410.21952}.
\newblock arXiv:2410.21952 [cs].

\bibitem[Li et~al.(2019)Li, Haque, Anil, Lucas, Grosse, and
  Jacobsen]{li_preventing_2019}
Qiyang Li, Saminul Haque, Cem Anil, James Lucas, Roger~B Grosse, and
  Joern-Henrik Jacobsen.
\newblock Preventing {Gradient} {Attenuation} in {Lipschitz} {Constrained}
  {Convolutional} {Networks}.
\newblock In \emph{Advances in {Neural} {Information} {Processing} {Systems}},
  volume~32. Curran Associates, Inc., 2019.
\newblock URL
  \url{https://proceedings.neurips.cc/paper_files/paper/2019/hash/1ce3e6e3f452828e23a0c94572bef9d9-Abstract.html}.

\bibitem[Liu et~al.(2022)Liu, Ben~Ayed, Galdran, and Dolz]{liu2022devil}
Bingyuan Liu, Ismail Ben~Ayed, Adrian Galdran, and Jose Dolz.
\newblock The devil is in the margin: Margin-based label smoothing for network
  calibration.
\newblock In \emph{Proceedings of the IEEE/CVF Conference on Computer Vision
  and Pattern Recognition}, pages 80--88, 2022.

\bibitem[Lundberg and Lee(2017)]{lundberg_unified_2017}
Scott~M Lundberg and Su-In Lee.
\newblock A {Unified} {Approach} to {Interpreting} {Model} {Predictions}.
\newblock In \emph{Advances in {Neural} {Information} {Processing} {Systems}},
  volume~30. Curran Associates, Inc., 2017.
\newblock URL
  \url{https://proceedings.neurips.cc/paper_files/paper/2017/hash/8a20a8621978632d76c43dfd28b67767-Abstract.html}.

\bibitem[Madry et~al.(2018)Madry, Makelov, Schmidt, Tsipras, and
  Vladu]{madry_towards_2018}
Aleksander Madry, Aleksandar Makelov, Ludwig Schmidt, Dimitris Tsipras, and
  Adrian Vladu.
\newblock Towards {Deep} {Learning} {Models} {Resistant} to {Adversarial}
  {Attacks}.
\newblock In \emph{International Conference on Learning Representations},
  February 2018.
\newblock URL \url{https://openreview.net/forum?id=rJzIBfZAb}.

\bibitem[Meunier et~al.(2022)Meunier, Delattre, Araujo, and
  Allauzen]{meunier_dynamical_2022}
Laurent Meunier, Blaise~J. Delattre, Alexandre Araujo, and Alexandre Allauzen.
\newblock A {Dynamical} {System} {Perspective} for {Lipschitz} {Neural}
  {Networks}.
\newblock In \emph{Proceedings of the 39th {International} {Conference} on
  {Machine} {Learning}}, pages 15484--15500. PMLR, June 2022.
\newblock URL \url{https://proceedings.mlr.press/v162/meunier22a.html}.

\bibitem[Miyato et~al.(2018)Miyato, Kataoka, Koyama, and
  Yoshida]{miyato_spectral_2018}
Takeru Miyato, Toshiki Kataoka, Masanori Koyama, and Yuichi Yoshida.
\newblock Spectral {Normalization} for {Generative} {Adversarial} {Networks}.
\newblock In \emph{International Conference on Learning Representations},
  February 2018.
\newblock URL
  \url{https://openreview.net/forum?id=B1QRgziT-&source=post_page---------------------------}.

\bibitem[Mukhoti et~al.(2020)Mukhoti, Kulharia, Sanyal, Golodetz, Torr, and
  Dokania]{mukhoti_calibrating_2020}
Jishnu Mukhoti, Viveka Kulharia, Amartya Sanyal, Stuart Golodetz, Philip Torr,
  and Puneet Dokania.
\newblock Calibrating {Deep} {Neural} {Networks} using {Focal} {Loss}.
\newblock In \emph{Advances in {Neural} {Information} {Processing} {Systems}},
  volume~33, pages 15288--15299. Curran Associates, Inc., 2020.
\newblock URL
  \url{https://proceedings.neurips.cc/paper_files/paper/2020/hash/aeb7b30ef1d024a76f21a1d40e30c302-Abstract.html}.

\bibitem[Murphy and Winkler(1977)]{murphy_reliability_1977}
Allan~H. Murphy and Robert~L. Winkler.
\newblock Reliability of {Subjective} {Probability} {Forecasts} of
  {Precipitation} and {Temperature}.
\newblock \emph{Journal of the Royal Statistical Society. Series C (Applied
  Statistics)}, 26\penalty0 (1):\penalty0 41--47, 1977.
\newblock ISSN 0035-9254.
\newblock \doi{10.2307/2346866}.
\newblock URL \url{https://www.jstor.org/stable/2346866}.
\newblock Publisher: [Royal Statistical Society, Oxford University Press].

\bibitem[Müller et~al.(2019)Müller, Kornblith, and Hinton]{muller_when_2019}
Rafael Müller, Simon Kornblith, and Geoffrey~E. Hinton.
\newblock When does label smoothing help?
\newblock \emph{Advances in Neural Information Processing Systems}, 32, 2019.
\newblock URL
  \url{https://proceedings.neurips.cc/paper_files/paper/2019/hash/f1748d6b0fd9d439f71450117eba2725-Abstract.html?ref=gojiberries.io}.

\bibitem[Naeini et~al.(2015)Naeini, Cooper, and
  Hauskrecht]{naeini_obtaining_2015}
Mahdi~Pakdaman Naeini, Gregory Cooper, and Milos Hauskrecht.
\newblock Obtaining {Well} {Calibrated} {Probabilities} {Using} {Bayesian}
  {Binning}.
\newblock \emph{Proceedings of the AAAI Conference on Artificial Intelligence},
  29\penalty0 (1), February 2015.
\newblock ISSN 2374-3468.
\newblock \doi{10.1609/aaai.v29i1.9602}.
\newblock URL \url{https://ojs.aaai.org/index.php/AAAI/article/view/9602}.
\newblock Number: 1.

\bibitem[Neyshabur et~al.(2018)Neyshabur, Bhojanapalli, and
  Srebro]{neyshabur2018pac}
Behnam Neyshabur, Srinadh Bhojanapalli, and Nathan Srebro.
\newblock A pac-bayesian approach to spectrally-normalized margin bounds for
  neural networks.
\newblock In \emph{International Conference on Learning Representations}, 2018.

\bibitem[Obadinma et~al.(2024)Obadinma, Zhu, and
  Guo]{obadinma_calibration_2024}
Stephen Obadinma, Xiaodan Zhu, and Hongyu Guo.
\newblock Calibration {Attacks}: {A} {Comprehensive} {Study} of {Adversarial}
  {Attacks} on {Model} {Confidence}.
\newblock \emph{Transactions on Machine Learning Research}, May 2024.
\newblock ISSN 2835-8856.
\newblock URL \url{https://openreview.net/forum?id=TXzz9xwdpv}.

\bibitem[Prach and Lampert(2022)]{prach_almost-orthogonal_2023}
Bernd Prach and Christoph~H Lampert.
\newblock Almost-orthogonal layers for efficient general-purpose lipschitz
  networks.
\newblock In \emph{European Conference on Computer Vision}, pages 350--365.
  Springer, 2022.

\bibitem[Prach and Lampert(2025)]{prach2025intriguing}
Bernd Prach and Christoph~H Lampert.
\newblock Intriguing properties of robust classification.
\newblock In \emph{Proceedings of the Computer Vision and Pattern Recognition
  Conference}, pages 660--669, 2025.

\bibitem[Qin et~al.(2021)Qin, Wang, Beutel, and Chi]{qin_improving_2021}
Yao Qin, Xuezhi Wang, Alex Beutel, and Ed~Chi.
\newblock Improving {Calibration} through the {Relationship} with {Adversarial}
  {Robustness}.
\newblock In \emph{Advances in {Neural} {Information} {Processing} {Systems}},
  volume~34, pages 14358--14369. Curran Associates, Inc., 2021.
\newblock URL
  \url{https://proceedings.neurips.cc/paper_files/paper/2021/hash/78421a2e0e1168e5cd1b7a8d23773ce6-Abstract.html}.

\bibitem[Ribeiro et~al.(2016)Ribeiro, Singh, and Guestrin]{ribeiro_why_2016}
Marco~Tulio Ribeiro, Sameer Singh, and Carlos Guestrin.
\newblock "{Why} {Should} {I} {Trust} {You}?": {Explaining} the {Predictions}
  of {Any} {Classifier}.
\newblock In \emph{Proceedings of the 22nd {ACM} {SIGKDD} {International}
  {Conference} on {Knowledge} {Discovery} and {Data} {Mining}}, pages
  1135--1144, San Francisco California USA, August 2016. ACM.
\newblock ISBN 978-1-4503-4232-2.
\newblock \doi{10.1145/2939672.2939778}.
\newblock URL \url{https://dl.acm.org/doi/10.1145/2939672.2939778}.

\bibitem[Serrurier et~al.(2021)Serrurier, Mamalet, Gonzalez-Sanz, Boissin,
  Loubes, and del Barrio]{serrurier_achieving_2021}
Mathieu Serrurier, Franck Mamalet, Alberto Gonzalez-Sanz, Thibaut Boissin,
  Jean-Michel Loubes, and Eustasio del Barrio.
\newblock Achieving {Robustness} in {Classification} {Using} {Optimal}
  {Transport} {With} {Hinge} {Regularization}.
\newblock In \emph{Proceedings of the Computer Vision and Pattern Recognition
  Conference}, pages 505--514, 2021.
\newblock URL
  \url{https://openaccess.thecvf.com/content/CVPR2021/html/Serrurier_Achieving_Robustness_in_Classification_Using_Optimal_Transport_With_Hinge_Regularization_CVPR_2021_paper.html}.

\bibitem[Singla and Feizi(2021)]{singla_skew_2021}
Sahil Singla and Soheil Feizi.
\newblock Skew {Orthogonal} {Convolutions}.
\newblock In \emph{Proceedings of the 38th {International} {Conference} on
  {Machine} {Learning}}, pages 9756--9766. PMLR, July 2021.
\newblock URL \url{https://proceedings.mlr.press/v139/singla21a.html}.

\bibitem[Szegedy et~al.(2014)Szegedy, Zaremba, Sutskever, Bruna, Erhan,
  Goodfellow, and Fergus]{szegedy2014intriguing}
Christian Szegedy, Wojciech Zaremba, Ilya Sutskever, Joan Bruna, Dumitru Erhan,
  Ian Goodfellow, and Rob Fergus.
\newblock Intriguing properties of neural networks.
\newblock In \emph{International Conference on Learning Representations
  (ICLR)}, 2014.
\newblock URL \url{https://arxiv.org/abs/1312.6199}.

\bibitem[Tao et~al.(2023)Tao, Dong, and Xu]{tao_dual_2023}
Linwei Tao, Minjing Dong, and Chang Xu.
\newblock Dual {Focal} {Loss} for {Calibration}.
\newblock In \emph{Proceedings of the 40th {International} {Conference} on
  {Machine} {Learning}}, pages 33833--33849. PMLR, July 2023.
\newblock URL \url{https://proceedings.mlr.press/v202/tao23a.html}.

\bibitem[Trockman and Kolter(2021)]{trockmanorthogonalizing}
Asher Trockman and J~Zico Kolter.
\newblock Orthogonalizing convolutional layers with the cayley transform.
\newblock In \emph{International Conference on Learning Representations}, 2021.

\bibitem[Tsuzuku et~al.(2018)Tsuzuku, Sato, and
  Sugiyama]{tsuzuku_lipschitz-margin_2018}
Yusuke Tsuzuku, Issei Sato, and Masashi Sugiyama.
\newblock Lipschitz-{Margin} {Training}: {Scalable} {Certification} of
  {Perturbation} {Invariance} for {Deep} {Neural} {Networks}.
\newblock In \emph{Advances in {Neural} {Information} {Processing} {Systems}},
  volume~31. Curran Associates, Inc., 2018.
\newblock URL
  \url{https://proceedings.neurips.cc/paper/2018/hash/485843481a7edacbfce101ecb1e4d2a8-Abstract.html}.

\bibitem[Vasilev and D'yakonov(2023)]{vasilev_calibration_2023}
Ruslan Vasilev and Alexander D'yakonov.
\newblock Calibration of {Neural} {Networks}, March 2023.
\newblock URL \url{http://arxiv.org/abs/2303.10761}.
\newblock arXiv:2303.10761.

\bibitem[Verhaeghe et~al.(2023)Verhaeghe, De~Corte, Sauer, Hendriks, Thijssens,
  Ongenae, Elbers, De~Waele, and Van~Hoecke]{verhaeghe_generalizable_2023}
Jarne Verhaeghe, Thomas De~Corte, Christopher~M. Sauer, Tom Hendriks, Olivier
  W.~M. Thijssens, Femke Ongenae, Paul Elbers, Jan De~Waele, and Sofie
  Van~Hoecke.
\newblock Generalizable calibrated machine learning models for real-time atrial
  fibrillation risk prediction in {ICU} patients.
\newblock \emph{International Journal of Medical Informatics}, 175:\penalty0
  105086, July 2023.
\newblock ISSN 1386-5056.
\newblock \doi{10.1016/j.ijmedinf.2023.105086}.
\newblock URL
  \url{https://www.sciencedirect.com/science/article/pii/S1386505623001041}.

\bibitem[Virmaux and Scaman(2018)]{virmaux_lipschitz_2018}
Aladin Virmaux and Kevin Scaman.
\newblock Lipschitz regularity of deep neural networks: analysis and efficient
  estimation.
\newblock In \emph{Advances in {Neural} {Information} {Processing} {Systems}},
  volume~31. Curran Associates, Inc., 2018.
\newblock URL
  \url{https://proceedings.neurips.cc/paper_files/paper/2018/hash/d54e99a6c03704e95e6965532dec148b-Abstract.html}.

\bibitem[Vovk et~al.(2022)Vovk, Gammerman, and Shafer]{vovk_algorithmic_2022}
Vladimir Vovk, Alexander Gammerman, and Glenn Shafer.
\newblock \emph{Algorithmic {Learning} in a {Random} {World}}.
\newblock Springer International Publishing, Cham, 2022.
\newblock ISBN 978-3-031-06648-1 978-3-031-06649-8.
\newblock \doi{10.1007/978-3-031-06649-8}.
\newblock URL \url{https://link.springer.com/10.1007/978-3-031-06649-8}.

\bibitem[Xu et~al.(2022)Xu, Li, and Li]{xu_lot_2022}
Xiaojun Xu, Linyi Li, and Bo~Li.
\newblock {LOT}: {Layer}-wise {Orthogonal} {Training} on {Improving} l2
  {Certified} {Robustness}.
\newblock \emph{Advances in Neural Information Processing Systems},
  35:\penalty0 18904--18915, December 2022.
\newblock URL
  \url{https://proceedings.neurips.cc/paper_files/paper/2022/hash/77d52754ff6b2de5a5d96ee921b6b3cd-Abstract-Conference.html}.

\bibitem[Ye et~al.(2023)Ye, Ma, Cao, and Tang]{ye_mitigating_2023}
Wenqian Ye, Yunsheng Ma, Xu~Cao, and Kun Tang.
\newblock Mitigating {Transformer} {Overconfidence} via {Lipschitz}
  {Regularization}.
\newblock In \emph{Proceedings of the {Thirty}-{Ninth} {Conference} on
  {Uncertainty} in {Artificial} {Intelligence}}, pages 2422--2432. PMLR, July
  2023.
\newblock URL \url{https://proceedings.mlr.press/v216/ye23a.html}.

\bibitem[Zadrozny and Elkan(2001)]{zadrozny_obtaining_2001}
Bianca Zadrozny and Charles Elkan.
\newblock Obtaining calibrated probability estimates from decision trees and
  naive {Bayesian} classiﬁers.
\newblock \emph{International Conference on Machine Learning (ICML)}, 2001.

\bibitem[Zadrozny and Elkan(2002)]{zadrozny_transforming_2002}
Bianca Zadrozny and Charles Elkan.
\newblock Transforming {Classifier} {Scores} into {Accurate} {Multiclass}
  {Probability} {Estimates}.
\newblock \emph{Proceedings of the ACM SIGKDD International Conference on
  Knowledge Discovery and Data Mining}, August 2002.
\newblock \doi{10.1145/775047.775151}.

\end{thebibliography}

    %%%%%%%%%%%%%%%%%%%%%%%%%%%%%%%%%%%%%%%%%%%%%%%%%%%%%%%%%%%%%%%%%%%%%%%%%%%%%%%
    %%%%%%%%%%%%%%%%%%%%%%%%%%%%%%%%%%%%%%%%%%%%%%%%%%%%%%%%%%%%%%%%%%%%%%%%%%%%%%%
    % APPENDIX
    %%%%%%%%%%%%%%%%%%%%%%%%%%%%%%%%%%%%%%%%%%%%%%%%%%%%%%%%%%%%%%%%%%%%%%%%%%%%%%%
    %%%%%%%%%%%%%%%%%%%%%%%%%%%%%%%%%%%%%%%%%%%%%%%%%%%%%%%%%%%%%%%%%%%%%%%%%%%%%%%
    \newpage
    \appendix
    \onecolumn

    \section{LiST Algorithm}
    \label{app:algorithm}

        \Cref{alg:list} provides the complete pseudocode for LiST. The procedure consists of two phases. In Phase~I, the Lipschitz constraint $L_e$ is updated at each epoch via the feedback rule $L_{e+1} \leftarrow L_e / T^*_e$, where $T^*_e$ is the optimal calibration temperature measured on $\mathcal{D}_\text{cal}$. This establishes a negative feedback loop that drives the network toward the calibrated equilibrium $T^* = 1$ (see Section~\ref{subsec:convergence_dynamics} for a detailed discussion of the convergence dynamics). Training is halted when $L$ has sufficiently converged (see~\Cref{subsec:stopping_criterion_and_data_reintegration}). In Phase~II, the constraint is frozen at $L^*$ and the network is fine-tuned on the combined dataset $\mathcal{D}_\text{total} = \mathcal{D}_\text{train} \cup \mathcal{D}_\text{cal}$, whose justification is discussed in~\Cref{subsec:stopping_criterion_and_data_reintegration}.
        \begin{algorithm}
            \caption{LiST: Lipschitz Scaling Training}
            \label{alg:list}
            \begin{algorithmic}
                \STATE {\bfseries Input:} Training set $\mathcal{D}_{train}$, Calibration set $\mathcal{D}_{cal}$, Initial constant $L_{0}$, Learning rate $\eta$. 
                \STATE {\bfseries Initialize:} $L_{0}$-Lipschitz network $f^{(L_0)}_{\theta}$. 
                \STATE 
                \STATE \textit{\color{blue} \textbf{// Phase I: Dynamic Search}} 
                \STATE $e \leftarrow 0$ 
                \STATE $L_{e}\leftarrow L_{0}$ 
                \WHILE{True} 
                    \STATE \textit{\color{gray} // 1. Optimize weights under current constraint $L_{e}$} 
                    \FOR{batch $(x, y)$ {\bfseries in} $\mathcal{D}_{train}$}
                        \STATE $\mathcal{L}\leftarrow \mathcal{L}(f^{(L_e)}_{\theta}(x), y)$
                        \STATE $\theta \leftarrow \theta - \eta \text{Optimizer}(\nabla_{\theta}\mathcal{L})$
                    \ENDFOR 
                    \STATE 
                    \STATE \textit{\color{gray} // 2. Update Constraint via Feedback Loop} 
                    \STATE Compute logits $z = f^{(L_e)}_{\theta}(x_{cal})$ on $\mathcal{D}_{cal}$ 
                    \STATE $T^{*}\leftarrow \arg\min_{T}\mathcal{L}_{CE}(\sigma(z/T), y_{cal})$ 
                    \STATE $L_{e+1}\leftarrow L_{e}/ T^{*}$ 
                    \STATE Project $f_{\theta}$ onto constraint $L_{e+1}$ \COMMENT{see~\Cref{app:implementation}}
                    \STATE $e \leftarrow e + 1$ 
                    \STATE 
                    \STATE \textit{\color{gray} // 3. Stopping Criteria on $L$ evolution} 
                    \IF{no more evolution on $L$}
                        \STATE break \COMMENT{see~\Cref{subsec:stopping_criterion_and_data_reintegration}} 
                    \ENDIF
                \ENDWHILE 
                \STATE 
                \STATE \textit{\color{blue} \textbf{// Phase~II: Data Re-integration}} 
                \STATE $L^{*}\leftarrow L_{e}$ 
                \STATE $\mathcal{D}_{total}\leftarrow \mathcal{D}_{train}\cup \mathcal{D}_{cal}$
                \FOR{epoch $e'=1$ {\bfseries to} $E$} 
                    \FOR{batch $(x, y)$ {\bfseries in} $\mathcal{D}_{total}$} 
                        \STATE $\mathcal{L}\leftarrow \mathcal{L}(f^{(L^*)}_{\theta}(x), y)$
                        \STATE $\theta \leftarrow \theta - \eta \text{Optimizer}(\nabla_{\theta}\mathcal{L})$
                    \ENDFOR 
                \ENDFOR 
                \STATE 
                \STATE {\bfseries Return:} Robust and Calibrated $L^{*}$-Lipschitz network $f^{(L^*)}_{\theta}$.
            \end{algorithmic}
        \end{algorithm}

    % \section{Formal Proofs}
    % \label{app:formal_proofs}

    \section{Existence of \texorpdfstring{$L^*$}{L*}}
    \label{app:existence_of_the_optimal_lipschitz_constant}
    
        % \subsection{Existence of $L^*$}
        % \label{app:existence_of_the_optimal_lipschitz_constant}
        
        We provide a formal proof of the existence of an intrinsically calibrated Lipschitz constant $L^*$, i.e., a training constraint under which the network requires no post-hoc temperature rescaling ($T^* = 1$). The argument relies on the Intermediate Value Theorem applied to the optimal calibration temperature $T^*(L)$, viewed as a function of the training constraint $L$. To handle the proof cleanly, we work with the inverse temperature $\beta = 1/T$, for which the calibration objective is a standard log-linear model and admits a globally strictly convex formulation.
        
        Let $f^{(L)}_\theta$ denote a network trained to convergence under Lipschitz constraint $L > 0$, with logits $z^{(L)}(x) = f^{(L)}_\theta(x)$. The cross-entropy on the calibration set $\mathcal{D}_{\mathrm{cal}} = \{(x_i, y_i)\}_{i=1}^N$, reparameterized in $\beta = 1/T$, reads:
        \begin{equation}\label{eq:psi_def}
            \Psi(\beta, L) = \frac{1}{N} \sum_{i=1}^N \left[ -\beta\, z^{(L)}_{i, y_i} + \mathrm{LSE}\!\left(\beta\, z^{(L)}_i\right) \right],
        \end{equation}
        where $\mathrm{LSE}(u) = \log \sum_c e^{u_c}$ denotes the log-sum-exp function. The optimal inverse calibration temperature is
        \begin{equation}\label{eq:beta_star_def}
            \beta^*(L) = \arg\min_{\beta > 0} \Psi(\beta, L), 
            \qquad T^*(L) = 1/\beta^*(L).
        \end{equation}
        
        \begin{assumption}[Trained-network family]\label{ass:family}
            We assume access to a family $\{f^{(L)}_\theta\}_{L > 0}$ of networks, each trained under Lipschitz constraint $L$, satisfying the following regularity properties:
            \begin{enumerate}[label=(\roman*), leftmargin=*]
                \item For every $L > 0$, the logits $z^{(L)}(x) = f^{(L)}_\theta(x)$ are measurable in $x$ and uniformly bounded on every compact subset $K \subset (0, +\infty)$: there exists $B_K < +\infty$ such that $\|z^{(L)}(x)\|_\infty \leq B_K$ for all $L \in K$ and all $x \in \mathcal{X}$.
                \item For $\mathcal{D}$-almost every $x$, the map $L \mapsto z^{(L)}(x)$ is continuous on $(0, +\infty)$.
            \end{enumerate}
        \end{assumption}
        
        \paragraph{Discussion.}
        Assumption~\ref{ass:family} posits the existence of a continuous, well-behaved trained-network family. In practice, the SGD-based optimization of non-convex objectives offers no formal guarantee that $L \mapsto \theta^*(L)$ is continuous, since local minima may bifurcate or disappear as $L$ varies. Assumption~\ref{ass:family} should therefore be read as a regularity property of the training procedure itself, empirically supported by Figure~\ref{fig:training_dynamics}: across initializations spanning three orders of magnitude, the trained networks consistently converge to the same intrinsic $L^*$, suggesting that the realized family $L \mapsto f^{(L)}_\theta$ behaves smoothly in the regimes of practical interest. Item (i) is automatic when $\mathcal{X}$ is compact (Section~\ref{sec:background_and_related_work}) and the architecture has continuous activations.
        
        \begin{assumption}[Non-degeneracy of $\mathcal{D}_{\mathrm{cal}}$]\label{ass:nondegen}
            For every $L > 0$, the calibration set $\mathcal{D}_{\mathrm{cal}}$ is non-degenerate under 
            $f^{(L)}_\theta$, in the sense that:
            \begin{enumerate}[label=(\roman*), leftmargin=*]
                \item there exists at least one sample $(x_i, y_i) \in \mathcal{D}_{\mathrm{cal}}$ such that $z^{(L)}_{i, y_i} < \max_{c \neq y_i} z^{(L)}_{i, c}$ (at least one misclassified example);
                \item there exists at least one sample for which the logits $z^{(L)}_i$ are not all equal;
                \item the average margin is positive on $\mathcal{D}_{\mathrm{cal}}$:
                \begin{equation}\label{eq:positive_margin}
                    \frac{1}{N} \sum_{i=1}^N \left[ z^{(L)}_{i, y_i} - \frac{1}{C} \sum_c z^{(L)}_{i, c} \right] > 0.
                \end{equation}
            \end{enumerate}
        \end{assumption}
        
        Conditions (i) and (ii) are very mild: (i) ensures that $\beta = +\infty$ is not optimal (which would correspond to $T^* = 0$), and (ii) excludes the degenerate case of a constant predictor. Condition (iii) requires that the trained network performs strictly better than a uniform predictor on $\mathcal{D}_{\mathrm{cal}}$, which is automatically satisfied by any model trained beyond random initialization. It ensures that the cross-entropy on $\mathcal{D}_{\mathrm{cal}}$ admits an interior minimizer in $(0, +\infty)$, excluding $\beta = 0$ as the infimum.
        
        \begin{lemma}[Existence and uniqueness of $\beta^*(L)$]\label{lem:existence_beta}
            Under Assumptions~\ref{ass:family} and~\ref{ass:nondegen}, for every $L > 0$ the function $\beta \mapsto \Psi(\beta, L)$ is globally strictly convex on $(0, +\infty)$ and attains its infimum at a unique interior point $\beta^*(L) \in (0, +\infty)$.
        \end{lemma}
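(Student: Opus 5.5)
The plan is to study $\Psi(\cdot,L)$ as a one-dimensional smooth function of $\beta$ for a fixed $L>0$ and fixed logits $z_i := z^{(L)}_i \in \mathbb{R}^C$. Assumption~\ref{ass:family}(i) only serves to make every $z_i$ finite, so $\Psi$ is finite and $C^\infty$ in $\beta$ on all of $\mathbb{R}$. Continuity in $L$ (Assumption~\ref{ass:family}(ii)) is not needed here; it will only matter later for the Intermediate Value argument. Writing $p_i(\beta)=\mathrm{softmax}(\beta z_i)$, the standard log-sum-exp identities give
\begin{equation*}
\partial_\beta \Psi(\beta,L) = \frac{1}{N}\sum_{i=1}^N \Bigl( \mathbb{E}_{c\sim p_i(\beta)}[z_{i,c}] - z_{i,y_i} \Bigr),
\qquad
\partial^2_\beta \Psi(\beta,L) = \frac{1}{N}\sum_{i=1}^N \mathrm{Var}_{c\sim p_i(\beta)}[z_{i,c}].
\end{equation*}

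\emph{Step 1 (strict convexity).} Each variance term is nonnegative. For every finite $\beta$, the softmax $p_i(\beta)$ has full support on the $C$ classes. Hence the variance for sample $i$ vanishes iff all entries of $z_i$ coincide. By Assumption~\ref{ass:nondegen}(ii), at least one sample has non-constant logits, so $\partial^2_\beta\Psi>0$ for every $\beta\in\mathbb{R}$, in particular on $(0,+\infty)$. Thus $\beta\mapsto\Psi(\beta,L)$ is strictly convex, and $\partial_\beta\Psi$ is continuous and strictly increasing. This immediately gives uniqueness of any minimizer.

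\emph{Step 2 (sign at the left end).} At $\beta=0$, $p_i(0)$ is uniform, so
\[
\partial_\beta\Psi(0,L)=-\frac1N\sum_i\Bigl(z_{i,y_i}-\frac1C\sum_c z_{i,c}\Bigr),
\]
which is $<0$ by Assumption~\ref{ass:nondegen}(iii). By continuity, $\partial_\beta\Psi<0$ on some interval $(0,\delta)$. Hence the infimum over $(0,+\infty)$ is not approached as $\beta\to0^+$.

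\emph{Step 3 (sign at the right end), the main point requiring care.} As $\beta\to+\infty$, $p_i(\beta)$ concentrates on $\arg\max_c z_{i,c}$, so $\mathbb{E}_{p_i(\beta)}[z_{i,c}]\to\max_c z_{i,c}$. This is a finite sum of bounded terms, so the limit passes through the average. Therefore
\[
\lim_{\beta\to\infty}\partial_\beta\Psi(\beta,L)=\frac1N\sum_i\Bigl(\max_c z_{i,c}-z_{i,y_i}\Bigr).
\]
Every summand is $\ge 0$. The misclassified sample from Assumption~\ref{ass:nondegen}(i) contributes strictly positively, so the limit is $>0$. Since $\partial_\beta\Psi$ is continuous and strictly increasing, there is a finite $\beta_1$ with $\partial_\beta\Psi(\beta_1,L)>0$. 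The derivative is negative near $0$ and positive at $\beta_1$, so the intermediate value theorem yields a unique $\beta^*(L)\in(0,\beta_1)$ with $\partial_\beta\Psi(\beta^*(L),L)=0$. By strict convexity (Step 1), this stationary point is the unique global minimizer on $(0,+\infty)$, and $T^*(L)=1/\beta^*(L)$ is well defined.

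The only delicate points are in Step 3: justifying the limit of the softmax expectation, and checking that the strictly positive contribution really comes from a sample whose maximum is strict over its true class. That is exactly what Assumption~\ref{ass:nondegen}(i) provides.
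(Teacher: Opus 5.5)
Your proof is correct and follows the paper's argument. Strict convexity comes from writing $\partial^2_\beta\Psi$ as an average of softmax variances together with Assumption~\ref{ass:nondegen}(ii), a negative slope at $0^+$ comes from (iii), and the misclassified sample from (i) forces growth as $\beta\to+\infty$. The only difference is at the right end: the paper shows $\Psi(\beta,L)\to+\infty$ via the log-sum-exp expansion, whereas you show $\lim_{\beta\to\infty}\partial_\beta\Psi>0$ and take the minimizer to be the unique zero of the derivative, which handles ties in $\arg\max_c z_{i,c}$ cleanly (with $k$ tied maxima the paper's $o(1)$ term actually tends to $\log k$, which is harmless but not vanishing).
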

        
        \begin{proof}
            Fix $L > 0$ and write $z_i = z^{(L)}(x_i)$. We split the proof into strict convexity and the analysis of the boundary behavior.
            
            \paragraph{Strict convexity.}
            The objective $\Psi(\beta, L)$ is the negative log-likelihood of a multinomial logistic regression with fixed features $z_i$ and a single scalar parameter $\beta$. Differentiating twice with respect to $\beta$:
            \begin{equation}\label{eq:second_derivative}
                \frac{\partial^2 \Psi}{\partial \beta^2}(\beta, L) = \frac{1}{N} \sum_{i=1}^N \mathrm{Var}_{\hat{p}_i^{(\beta)}}[z_{i, c}],
            \end{equation}
            where $\hat{p}_i^{(\beta)} = \mathrm{softmax}(\beta z_i)$ and $\mathrm{Var}_{\hat{p}_i^{(\beta)}}[z_{i,c}] = \sum_c \hat{p}_{i,c}^{(\beta)} z_{i,c}^2 - \big(\sum_c \hat{p}_{i,c}^{(\beta)} z_{i,c}\big)^2 \geq 0$ is the variance of the logits under the softmax distribution. By Assumption~\ref{ass:nondegen}(ii), at least one sample has non-constant logits, so the corresponding variance is strictly positive, and $\partial^2 \Psi / \partial \beta^2 > 0$ for all $\beta > 0$. Hence $\Psi(\cdot, L)$ is globally strictly convex on $(0, +\infty)$.
            
            \paragraph{Existence of an interior minimizer.}
            We show that $\Psi(\cdot, L)$ attains its infimum at an interior point of $(0, +\infty)$ by establishing the boundary behavior.
            
            \emph{As $\beta \to +\infty$:} for each sample $i$, we have
            \begin{equation*}
                \mathrm{LSE}(\beta z_i) = \beta \max_c z_{i,c} + \log\!\Big( \sum_c \exp\!\big(\beta(z_{i,c} - \max_c z_{i,c})\big) \Big)= \beta \max_c z_{i,c} + o(1),
            \end{equation*}
            hence
            \begin{equation}\label{eq:large_beta}
                -\beta z_{i, y_i} + \mathrm{LSE}(\beta z_i) = \beta\!\left(\max_c z_{i,c} - z_{i, y_i}\right) + o(1).
            \end{equation}
            By Assumption~\ref{ass:nondegen}(i), at least one sample satisfies $\max_c z_{i,c} - z_{i, y_i} > 0$, so $\Psi(\beta, L) \to +\infty$ as $\beta \to +\infty$.
            
            \emph{As $\beta \to 0^+$:} the softmax tends to the uniform distribution, so $\Psi(\beta, L) \to \log C$, which is a \emph{finite} limit. We therefore cannot invoke coercivity at $0^+$. Instead, we examine the right-derivative at $\beta = 0$:
            \begin{equation}\label{eq:derivative_zero}
                \frac{\partial \Psi}{\partial \beta}(0^+, L) = \frac{1}{N} \sum_{i=1}^N \left[ \frac{1}{C} \sum_c z^{(L)}_{i, c} - z^{(L)}_{i, y_i} \right].
            \end{equation}
            By Assumption~\ref{ass:nondegen}(iii), this quantity is strictly negative. Hence, there exists $\beta_0 > 0$ such that $\Psi(\beta_0, L) < \log C = \lim_{\beta \to 0^+} \Psi(\beta, L)$.
            
            \paragraph{Conclusion.}
            Combining strict convexity, the divergence at $+\infty$, and the strict decrease in a right-neighborhood of $0$, the function $\Psi(\cdot, L)$ attains its infimum at a unique interior point $\beta^*(L) \in (0, +\infty)$.
        \end{proof}
        
        \begin{lemma}[Continuity of $\beta^*(L)$ and $T^*(L)$]\label{lem:continuity}
            Under Assumptions~\ref{ass:family} and~\ref{ass:nondegen}, the maps $L \mapsto \beta^*(L)$ and $L \mapsto T^*(L)$ are continuous on $(0, +\infty)$.
        \end{lemma}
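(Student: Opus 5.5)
We will prove continuity of $\beta^*$ at an arbitrary $L_0 > 0$ by combining the first-order characterization from Lemma~\ref{lem:existence_beta} with joint continuity of $\Psi$. Continuity of $T^*$ then follows at once, since $T^* = 1/\beta^*$ and $\beta^*(L) \in (0,+\infty)$.

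\textbf{Step 1: joint continuity of $\Psi$ and $\partial_\beta \Psi$.} Since $\mathcal{D}_{\mathrm{cal}}$ is a finite set, Assumption~\ref{ass:family}(ii) gives continuity of each $L \mapsto z^{(L)}_i$ near $L_0$. We interpret "almost every $x$" as covering the calibration points, or equivalently we take (ii) pointwise on $\mathcal{D}_{\mathrm{cal}}$. The functions $(\beta, z) \mapsto -\beta z_{y} + \mathrm{LSE}(\beta z)$ and their $\beta$-derivatives are smooth. Hence $\Psi$ and
\[
G(\beta, L) := \partial_\beta \Psi(\beta, L) = \frac{1}{N}\sum_i \big( \langle \hat p_i^{(\beta)}, z_i^{(L)} \rangle - z^{(L)}_{i,y_i} \big)
\]
are jointly continuous on $(0,+\infty)\times(0,+\infty)$.

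\textbf{Step 2: sign-change argument.} By Lemma~\ref{lem:existence_beta}, $\Psi(\cdot, L_0)$ is strictly convex with an interior minimizer $\beta_0 = \beta^*(L_0)$. Hence $G(\cdot, L_0)$ is strictly increasing and vanishes exactly at $\beta_0$. Fix $\varepsilon > 0$ small enough that $\beta_0 - \varepsilon > 0$. Then $G(\beta_0 - \varepsilon, L_0) < 0 < G(\beta_0 + \varepsilon, L_0)$. By joint continuity there is $\delta > 0$ such that for $|L - L_0| < \delta$ these strict inequalities persist:
\[
G(\beta_0 - \varepsilon, L) < 0 < G(\beta_0 + \varepsilon, L).
\]
For each such $L$, Lemma~\ref{lem:existence_beta} (which uses Assumption~\ref{ass:nondegen} at $L$) says $G(\cdot, L)$ is strictly increasing with a unique zero at $\beta^*(L)$. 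The intermediate value theorem then places that zero in $(\beta_0 - \varepsilon, \beta_0 + \varepsilon)$, so $|\beta^*(L) - \beta_0| < \varepsilon$. This is continuity at $L_0$.

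\textbf{Main obstacle.} The delicate point is the passage from the "$\mathcal{D}$-almost every $x$" continuity in Assumption~\ref{ass:family}(ii) to continuity at the finitely many calibration points. A measure-zero exceptional set could, in principle, contain a calibration sample. We will handle this by noting that $\mathcal{D}_{\mathrm{cal}}$ is drawn from $\mathcal{D}$, so almost surely no calibration point lies in the null set, and the statement is understood almost surely.

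The uniform bound in Assumption~\ref{ass:family}(i) is not strictly needed for this finite-sample argument. It would matter only for a population version of $\Psi$, where it supplies a dominating function so that dominated convergence yields joint continuity of $G$. We will remark this so the proof also covers the population objective.
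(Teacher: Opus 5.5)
Your proof is correct, but it follows a different route from the paper's.

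The paper argues via Berge's Maximum Theorem. After establishing joint continuity of $\Psi$, it fixes $V=[L_0/2,2L_0]$ and uses the uniform bound of Assumption~\ref{ass:family}(i) to make the boundary estimates of Lemma~\ref{lem:existence_beta} uniform in $L\in V$. This gives a fixed compact $K=[\beta_-,\beta_+]$ containing $\beta^*(L)$ for all $L\in V$. Upper hemicontinuity of the argmin over the constant correspondence $L\mapsto K$, together with uniqueness, then yields continuity.

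You instead exploit the first-order structure. Since $\partial_\beta\Psi(\cdot,L)$ is strictly increasing with its unique zero at $\beta^*(L)$ for every $L$, continuity of $L\mapsto\partial_\beta\Psi(\beta,L)$ at just the two points $\beta_0\pm\varepsilon$ already traps $\beta^*(L)$ in $(\beta_0-\varepsilon,\beta_0+\varepsilon)$. Convexity therefore provides the localization for free. You need no uniform compact, no uniform control of the margins near $\beta=0$ or $\beta=+\infty$, and no Assumption~\ref{ass:family}(i). You also need only continuity in $L$ at fixed $\beta$, not joint continuity.

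This sidesteps the most delicate step of the paper's proof. There, the uniform positive lower bounds on the misclassification margin and on the average margin are justified via $B_V$, whereas they really follow from continuity in $L$ together with compactness of $V$.

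What the paper's route buys in exchange is generality. Berge's theorem needs only joint continuity and a unique minimizer inside a uniform compact. That template would survive for a non-convex recalibration objective, where your monotone-derivative argument is unavailable.

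Your explicit handling of the ``$\mathcal{D}$-almost every $x$'' clause addresses a point the paper passes over silently. The almost-sure argument does require the exceptional null set to be fixed independently of the calibration draw. This holds when the family $f^{(L)}_\theta$ is trained without access to $\mathcal{D}_{\mathrm{cal}}$.
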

        
        \begin{proof}
            We argue via Berge's Maximum Theorem. We first establish joint continuity of $\Psi$, then exhibit a uniform compact containing the minimizer on a neighborhood of any $L_0$.
        
            \paragraph{Joint continuity.}
            Since $\mathcal{D}_{\mathrm{cal}}$ is finite, $\Psi(\beta, L) = \frac{1}{N} \sum_{i=1}^N \big[ -\beta z^{(L)}_{i, y_i} + \mathrm{LSE}(\beta z^{(L)}_i) \big]$ is a finite sum of compositions of continuous functions: $(\beta, L) \mapsto \beta z^{(L)}_i$ is continuous by Assumption~\ref{ass:family}(ii), and $\mathrm{LSE}$ is continuous on $\mathbb{R}^C$. Hence $\Psi$ is jointly continuous on $(0, +\infty)^2$.
            
            \paragraph{Uniform compact.}
            Fix $L_0 > 0$ and let $V = [L_0/2, 2 L_0]$. By Assumption~\ref{ass:family}(i), there exists $B_V < +\infty$ such that $\|z^{(L)}(x)\|_\infty \leq B_V$ for all $L \in V$ and all $x \in \mathcal{X}$. The boundary estimates of Lemma~\ref{lem:existence_beta} can then be made uniform on $V$: there exist $0 < \beta_- < \beta_+ < +\infty$ such that
            \begin{equation*}
                \beta^*(L) \in [\beta_-, \beta_+] \quad \text{for all } L \in V.
            \end{equation*}
            Indeed, equation~\eqref{eq:large_beta} combined with the uniform bound on the margins (deduced from $B_V$ and Assumption~\ref{ass:nondegen}(i)) yields a $\beta_+$ independent of $L \in V$ beyond which $\Psi(\beta, L) > \log C$; similarly, equation~\eqref{eq:derivative_zero} combined with a uniform positive lower bound on the average margin (Assumption~\ref{ass:nondegen}(iii) applied uniformly on $V$ via $B_V$) yields a $\beta_-$ below which $\partial \Psi / \partial \beta < 0$ for all $L \in V$. Set $K = [\beta_-, \beta_+]$.
            
            \paragraph{Conclusion.}
            The constant correspondence $\Gamma : V \to K$ is trivially continuous (both upper and lower hemicontinuous). By Berge's Maximum Theorem applied to $(\beta, L) \mapsto \Psi(\beta, L)$ on the compact-valued correspondence $\Gamma$, the argmin correspondence
            \begin{equation}\label{eq:argmin_correspondence}
                L \mapsto \arg\min_{\beta \in K} \Psi(\beta, L)
            \end{equation}
            is upper hemicontinuous. By Lemma~\ref{lem:existence_beta}, the minimizer is unique on $(0, +\infty)$, and \emph{a fortiori} on $K$, so this correspondence is single-valued and upper hemicontinuity reduces to continuity of $L \mapsto \beta^*(L)$ on $V$. Since $L_0 \in (0, +\infty)$ was arbitrary, $\beta^*$ is continuous on $(0, +\infty)$.
            
            Finally, $T^*(L) = 1/\beta^*(L)$ is continuous on $(0, +\infty)$ as the composition of $\beta^*$ and the continuous map $\beta \mapsto 1/\beta$ on $(0, +\infty)$, well-defined since $\beta^*(L) > 0$ by Lemma~\ref{lem:existence_beta}.
        \end{proof}
        
        \begin{assumption}[Under-confidence for small $L$]\label{ass:underconf}
            There exists $L_- > 0$ such that $T^*(L_-) < 1$, equivalently $\beta^*(L_-) > 1$. Mechanistically, as $L \to 0$, logit magnitudes are bounded by $L \cdot \mathrm{diam}(\mathcal{X}) \to 0$, forcing the softmax distribution towards uniform and yielding systematically low-confidence predictions. This regime is empirically confirmed in Figures~\ref{fig:correlation},~\ref{fig:pareto_analysis}, and~\ref{fig:fixed_reliability_diagram}.
        \end{assumption}
        
        \begin{assumption}[Over-confidence for large $L$]\label{ass:overconf}
            There exists $L_+ > L_-$ such that $T^*(L_+) > 1$, equivalently $\beta^*(L_+) < 1$. For $L$ sufficiently large, the Lipschitz constraint becomes inactive and the network behaves as an unconstrained model trained with cross-entropy, which is known to be systematically over-confident \citep{guo_calibration_2017}. This regime is empirically confirmed in Figure~\ref{fig:pareto_analysis} (left) and Figure~\ref{fig:fixed_reliability_diagram}.
        \end{assumption}
        
        \begin{proposition}[Existence of $L^*$]\label{prop:existence_L_star}
            Under Assumptions~\ref{ass:family},~\ref{ass:nondegen},~\ref{ass:underconf}, and~\ref{ass:overconf}, there exists $L^* \in (0, +\infty)$ such that
            \begin{equation}\label{eq:L_star_def}
                T^*(L^*) = 1,
            \end{equation}
            i.e., the network trained under constraint $L^*$ is intrinsically calibrated and requires no post-hoc temperature rescaling.
        \end{proposition}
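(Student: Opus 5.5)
The proof is a direct application of the Intermediate Value Theorem to the continuous map $L \mapsto \beta^*(L)$ given by Lemma~\ref{lem:continuity}. We work with the inverse temperature rather than $T^*$ because Lemma~\ref{lem:existence_beta} guarantees that $\beta^*(L)$ is well defined, unique, and strictly positive for every $L>0$. As a result, $T^*(L) = 1/\beta^*(L)$ is finite and positive, and the equation $T^*(L)=1$ is equivalent to $\beta^*(L)=1$.

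First, we restrict to the compact interval $[L_-, L_+] \subset (0,+\infty)$, where $L_-$ and $L_+$ are provided by Assumptions~\ref{ass:underconf} and~\ref{ass:overconf}; by assumption $L_- < L_+$. Lemma~\ref{lem:continuity} gives continuity of $\beta^*$ on all of $(0,+\infty)$, so its restriction to this interval is continuous. Second, we define $g(L) = \beta^*(L) - 1$, which is also continuous on $[L_-,L_+]$. Assumption~\ref{ass:underconf} gives $g(L_-) > 0$, and Assumption~\ref{ass:overconf} gives $g(L_+) < 0$. Third, the Intermediate Value Theorem yields some $L^* \in (L_-, L_+)$ with $g(L^*) = 0$. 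Hence $\beta^*(L^*) = 1$, and therefore $T^*(L^*) = 1$. Since $L_->0$, the point $L^*$ lies in $(0,+\infty)$, and it satisfies \eqref{eq:L_star_def}.

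The only step that needs care is the continuity input, and all of it has already been absorbed into Lemma~\ref{lem:continuity}. That lemma relies on two things: Assumption~\ref{ass:family} (continuity of the logits in $L$ together with local uniform bounds), and the uniform interior localization $[\beta_-,\beta_+]$ of the minimizer used there. In particular, it uses that Assumption~\ref{ass:nondegen} holds with constants that do not degenerate on compact neighbourhoods. Granting these, the proposition needs no further estimates. Finally, the argument proves existence only and says nothing about uniqueness of $L^*$, which is consistent with the limitation noted in Section~\ref{subsec:limitations}.
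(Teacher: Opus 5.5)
Your proposal is correct and is essentially the paper's proof. The paper applies the Intermediate Value Theorem directly to $T^*$ on $[L_-, L_+]$, using Lemma~\ref{lem:continuity} and $T^*(L_-) < 1 < T^*(L_+)$; you apply it to $\beta^* - 1$ instead, which is equivalent since $T^* = 1/\beta^*$ with $\beta^* > 0$. Your caveat about uniform localization concerns the proof of Lemma~\ref{lem:continuity}, not this proposition, and that uniformity follows anyway from continuity in $L$ of the finitely many calibration logits together with compactness of the neighbourhood $[L_0/2, 2L_0]$.
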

        
        \begin{proof}
            Let $L_-, L_+$ be given by Assumptions~\ref{ass:underconf} and~\ref{ass:overconf} respectively, with $0 < L_- < L_+$. By Lemma~\ref{lem:continuity}, $T^*$ is continuous on $(0, +\infty)$, hence on $[L_-, L_+]$. Since $T^*(L_-) < 1 < T^*(L_+)$, the Intermediate Value Theorem yields the existence of at least one $L^* \in (L_-, L_+)$ such that $T^*(L^*) = 1$.
        \end{proof}
        
        \begin{remark}
            Proposition~\ref{prop:existence_L_star} guarantees existence but not uniqueness of $L^*$. In practice, Figure~\ref{fig:correlation} suggests that the calibration profile $L \mapsto \mathrm{ECE}(L)$ has a single well-defined minimum on the datasets considered, consistent with a unique crossing of $T^*(L) = 1$. A sufficient condition for uniqueness would be the strict monotonicity of $T^*(L)$ in $L$, which we conjecture holds under mild regularity conditions on the training dynamics, and leave as an open theoretical question.
        \end{remark}
        
        \begin{remark}
            Assumptions~\ref{ass:underconf} and~\ref{ass:overconf} are not merely technical: they identify the two boundary regimes that bracket $L^*$. Both are empirically validated (Figures~\ref{fig:correlation},~\ref{fig:pareto_analysis},~\ref{fig:fixed_reliability_diagram}) and structurally motivated: under-confidence for small $L$ follows from the contraction of logit magnitudes (Section~\ref{sec:the_lipschitz_temperature_duality}), while over-confidence for large $L$ is the well-documented behavior of unconstrained networks trained with cross-entropy \citep{guo_calibration_2017}. Together with Assumption~\ref{ass:family}, which posits the existence of a continuous trained-network family, they form a set of regularity conditions on the training procedure rather than a derivation from first principles. The empirical convergence of LiST to $L^*$ across initializations (Figure~\ref{fig:training_dynamics}) constitutes the strongest \emph{a posteriori} validation of these assumptions.
        \end{remark}
        
        \begin{remark}[Reparameterization in $\beta$]
            The reparameterization $\beta = 1/T$ is mathematically convenient because $\Psi(\beta, L)$ is the negative log-likelihood of a one-parameter log-linear model in $\beta$, which is a standard globally strictly convex objective \citep{boyd_convex_2023}. Working directly in $T$, by contrast, yields an objective $\Phi(T, L) = \Psi(1/T, L)$ whose second derivative in $T$ involves both the variance term and a first-order chain-rule term that does not have a fixed sign globally. The two formulations are equivalent at the optimum (where $T^* = 1/\beta^*$), but the $\beta$ formulation makes the strict convexity argument transparent.
        \end{remark}

    \section{Additional Results and Qualitative Analysis}
    \label{app:additional_results_and_qualitative_analysis}

        \subsection{Additional Results on CIFAR-100}
        \label{app:additional_results_on_cifar100}

        \Cref{fig:correlation_cifar100} and~\Cref{fig:pareto_analysis_cifar100} extends the analyses of~\Cref{fig:correlation} and~\Cref{fig:pareto_analysis} to CIFAR-100, a significantly harder classification task with 100 classes. The ECE profile as a function of the Lipschitz constant (\Cref{fig:correlation_cifar100}a) reproduces the qualitative pattern observed on CIFAR-10: a low constraint yields systematic underconfidence, while a high constraint leads to overconfidence, with a well-defined minimum in between. This confirms that the existence of a calibrated operating point $L^*$ is not specific to CIFAR-10.
        \begin{figure}[t]
            \centering
            \includegraphics[width=\linewidth]{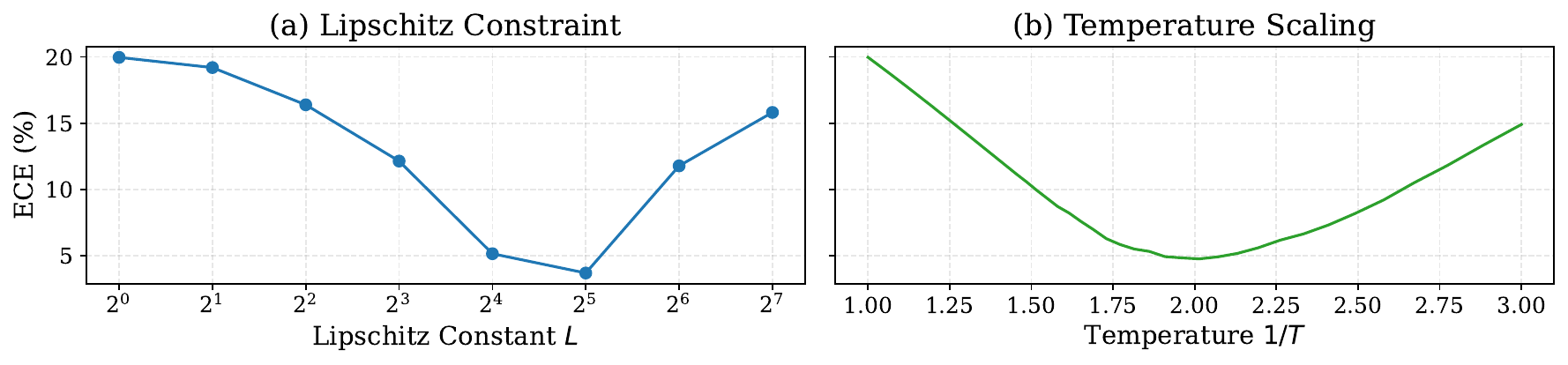}
            \caption{
                \textbf{ECE as a function of the logit scaling mechanism.} (a) Networks trained with varying fixed Lipschitz constants $L$. (b) Post-hoc Temperature Scaling on a fixed 1-Lipschitz network (plotted against $1/T$). Both profiles are qualitatively similar.
            }
            \label{fig:correlation_cifar100}
        \end{figure}

        The Pareto analysis of~\Cref{fig:pareto_analysis_cifar100} further corroborates the findings of~\Cref{subsec:results}. With $\xi = 0$, LiST converges to a natural equilibrium on the accuracy-robustness Pareto front, consistent with the calibrated point identified on CIFAR-10. Varying the offset $\xi$ recovers a fully calibrated Pareto front, demonstrating that the decoupling between calibration and the accuracy-robustness trade-off generalizes beyond CIFAR-10.
        \begin{figure}[t]
            \centering
            \includegraphics[width=\linewidth]{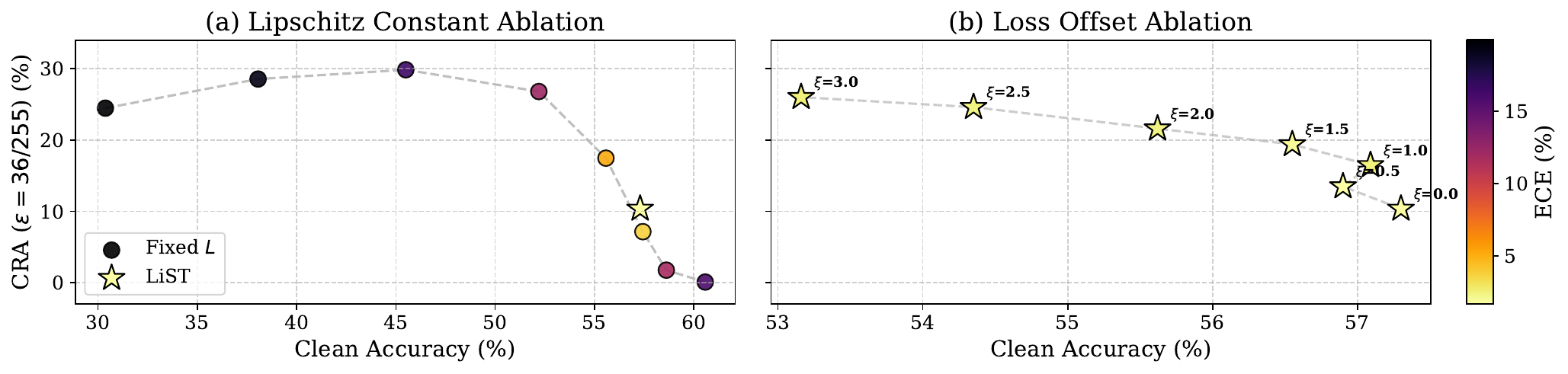}
            \caption{\textbf{Pareto Analysis on CIFAR-100.} (Left) LiST naturally converges to the calibrated point on the Pareto front. (Right) Varying $\xi$ yields a fully calibrated Pareto front using $100\%$ of training data.}
            \label{fig:pareto_analysis_cifar100}
        \end{figure}

        \subsection{Qualitative Analysis}
        \label{app:qualitative_analysis}

        To complement the quantitative metrics reported in the main paper, we provide reliability diagrams~\citep{murphy_reliability_1977} to assess calibration visually. The visual style is inspired by~\citep{vasilev_calibration_2023}, which we found particularly informative.

        \Cref{fig:fixed_reliability_diagram} shows the reliability diagrams of the fixed-$L$ baselines on CIFAR-10 and CIFAR-100, corresponding to the networks evaluated in~\Cref{fig:correlation}, \ref{fig:pareto_analysis}, \ref{fig:correlation_cifar100}, \ref{fig:pareto_analysis_cifar100} and~\Cref{tab:main_results}. On both datasets, the diagrams confirm the systematic transition from underconfidence ($L \rightarrow 0$) to overconfidence ($L \rightarrow \infty$), with a calibrated regime in between. This visually corroborates the ESCE results of~\Cref{tab:main_results} and provides empirical support for Lemma~\ref{lem:continuity} and Proposition~\ref{prop:existence_L_star}.
        \begin{figure}[t]
            \centering
            \begin{subfigure}{\linewidth}
                \includegraphics[width=\linewidth]{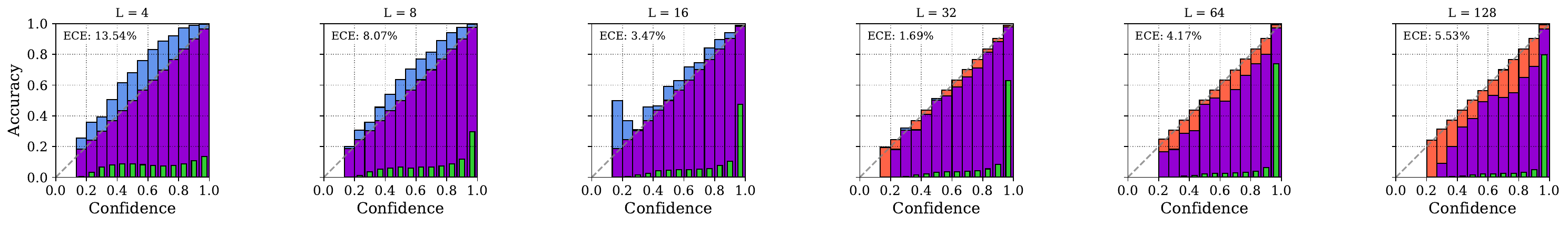}
                \caption{CIFAR-10}
            \end{subfigure}
            \hfill
            \begin{subfigure}{\linewidth}
                \includegraphics[width=\linewidth]{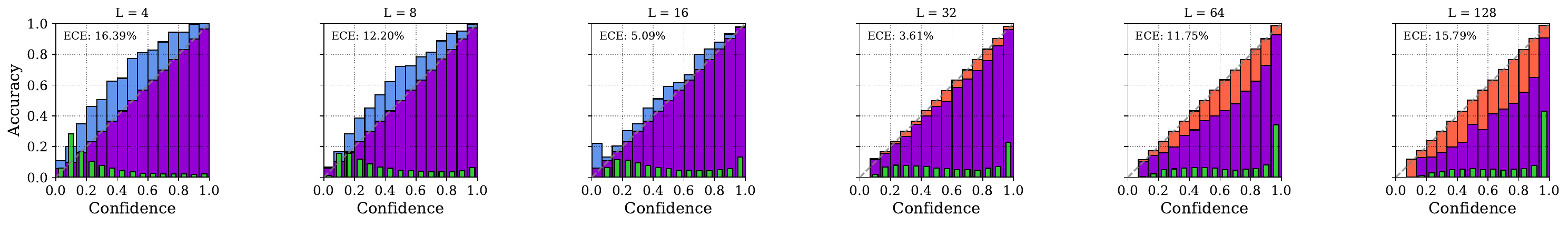}
                \caption{CIFAR-100}
            \end{subfigure}
            \caption{\textbf{Reliability Diagrams of the fixed-$L$ baselines on CIFAR.} On both datasets, we observe the same trend of under-to-over confidence as $L$ grows.}
            \label{fig:fixed_reliability_diagram}
        \end{figure}
        \Cref{fig:cross_datasets_reliability} shows the reliability diagrams of LiST on CIFAR-10, CIFAR-100 and Tiny-ImageNet (with $\xi = 0$). Across all three datasets, LiST produces diagrams that closely follow the diagonal, qualitatively confirming that the networks are intrinsically calibrated.
        \begin{figure}[t]
            \centering
            \includegraphics[width=0.7\linewidth]{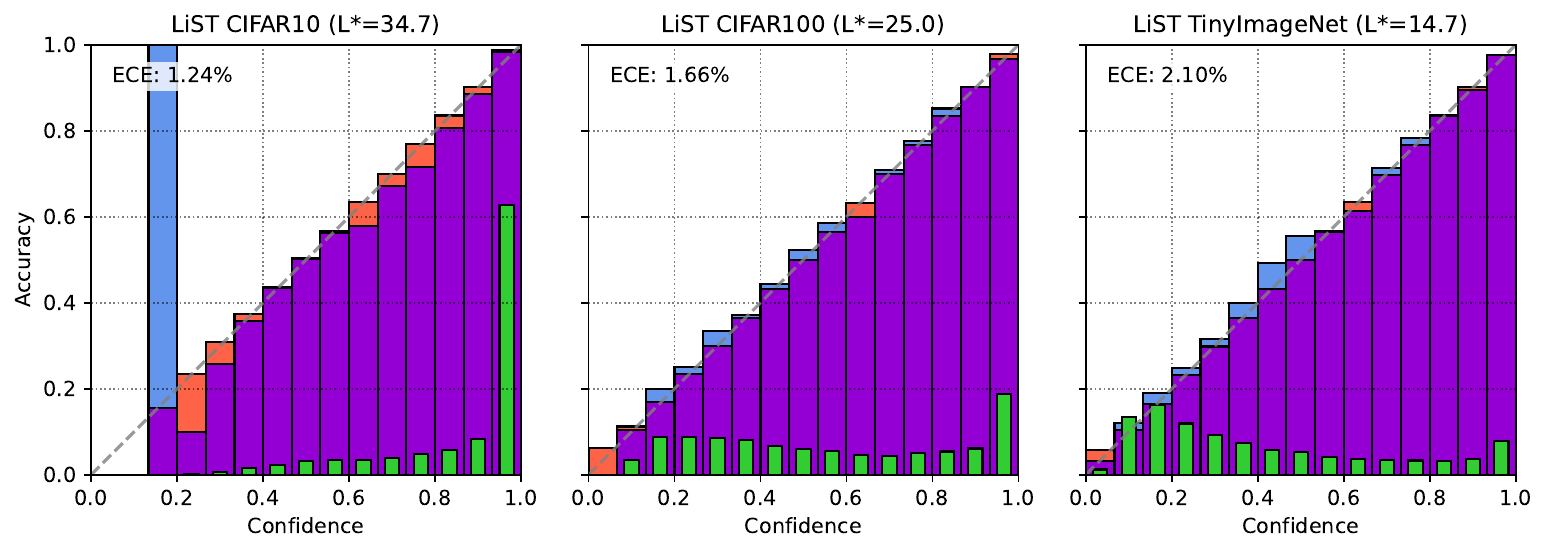}
            \caption{\textbf{Reliability Diagrams of LiST for each dataset.} LiST networks are intrinsically calibrated on CIFAR-10/100 and Tiny-ImageNet.}
            \label{fig:cross_datasets_reliability}
        \end{figure}

    \section{More Related Works}
    \label{app:more_related_works}

        \subsection{Calibration of Neural Networks}
        \label{app:calibration_of_neural_networks}

            The study of calibration has produced a rich body of work, which we briefly survey here to contextualize LiST. \textit{Post-hoc methods} recalibrate a fixed model on a held-out set: beyond Temperature Scaling~\citep{guo_calibration_2017}, Vector and Matrix Scaling~\citep{guo_calibration_2017} introduce class-wise rescaling, while Dirichlet Calibration~\citep{kull_beyond_2019} generalizes this to a full Dirichlet reparametrization. Histogram Binning~\citep{zadrozny_obtaining_2001} and Isotonic Regression~\citep{zadrozny_transforming_2002} are non-parametric alternatives, and Bayesian Binning into Quantiles~\citep{naeini_obtaining_2015} extends them with a Bayesian treatment. While effective on i.i.d. test data, these methods leave the underlying model untouched and therefore cannot improve robustness, as we discuss in~\Cref{subsec:the_robustness_accuracy_calibration_trade_off}.
    
            \textit{Training-time methods} instead modify the loss to encourage calibration during optimization. Label Smoothing~\citep{muller_when_2019} softens one-hot targets and reduces overconfidence, while Focal Loss~\citep{mukhoti_calibrating_2020} downweights well-classified examples and was shown to improve calibration as a side effect. MMCE~\citep{kumar_trainable_2018} introduces a kernel-based calibration regularizer, and AvUC~\citep{krishnan_improving_2020} adds an accuracy-versus-uncertainty term. More recently, Dual Focal Loss~\citep{tao_dual_2023} and MbLS~\citep{liu2022devil} further refine the loss design for improved calibration. Unlike LiST, none of these approaches leverages the model's own structural properties as a feedback signal during training, and none provides robustness guarantees.
    
            A complementary line of work studies the \textit{interaction between calibration and robustness}.~\citep{grabinski_robust_2022} show that adversarially trained models tend to be less overconfident, and~\citep{qin_improving_2021} propose AR-AdaLS to improve calibration using per-sample robustness. Most relevant to our work,~\citep{grosse_limitations_2019} and~\citep{galil_disrupting_2021} highlight that uncertainty-targeted attacks can break calibration even when accuracy is preserved, motivating training procedures that produce intrinsically calibrated robust models, a property that LiST achieves by construction.

        \subsection{Lipschitz-Constrained Neural Networks}
        \label{app:lipschitz_constrained_neural_networks}

            Lipschitz-constrained architectures have emerged as a principled route to certified robustness. Early work focused on enforcing the constraint via spectral normalization~\citep{miyato_spectral_2018} or via orthogonal parameterizations~\citep{anil_sorting_2019}. Subsequent contributions improved the expressivity of these architectures: Cayley convolutions~\citep{trockmanorthogonalizing}, BCOP~\citep{li_preventing_2019}, SOC~\citep{singla_skew_2021}, LOT~\citep{xu_lot_2022}, and AOL~\citep{prach_almost-orthogonal_2023}. More recent efforts, including SLL~\citep{araujo2023unified}, CPL~\citep{meunier_dynamical_2022}, and the adaptive orthogonal convolutions of~\citep{boissin2025adaptive} further close the gap between expressivity and certifiability. Our experiments rely on this last family of architectures, but LiST is agnostic to the specific Lipschitz parametrization and could be combined with any of the above.

            A second axis concern the \textit{training objectives} in conjunction with the Lipschitz constraint.~\citep{tsuzuku_lipschitz-margin_2018} introduced margin-based training to improve certified radii, while~\citep{serrurier_achieving_2021} proposed a hinge-regularized optimal transport loss.~\citep{bethune_pay_2022} showed that, under fixed $L=1$, the temperature parameter $\tau$ in the cross-entropy loss governs the entire accuracy-robustness Pareto front, and recent works~\citep{prach_almost-orthogonal_2023, prach2025intriguing, lai2025enhancing} further refine this loss family. Our analysis (\Cref{sec:the_lipschitz_temperature_duality}) reveals that the $\tau$ parameter and the Lipschitz constant $L$ play structurally equivalent roles on the logits, which motivates absorbing $\tau$ into $L$ and treating $L$ itself as the dynamic control variable; a perspective, to our knowledge, not previously explored.

            Finally, a few works have studied calibration in Lipschitz networks implicitly.~\citep{ye_mitigating_2023} show that enforcing Lipschitz regularization on Transformer architectures mitigates overconfidence, corroborating our observation that the Lipschitz constant structurally governs the confidence level of a network. However, their approach uses regularization as a soft penalty rather than as a dynamic control variable, and does not establish a formal connection to Temperature Scaling nor provide certified robustness guarantees. To the best of our knowledge, LiST is the first method to use calibration as an explicit training signal for Lipschitz-constrained networks,
    
    \section{Implementation Details and Compute}
    \label{app:implementation}
    
        \paragraph{Libraries and Layer Details.}
        All Lipschitz-constrained networks are implemented using the \texttt{deel-torchlip}\footnote{\url{https://github.com/deel-ai/deel-torchlip}} and \texttt{orthogonium}\footnote{\url{https://github.com/thib-s/orthogonium}} libraries, which provide certified orthogonal convolutional and linear layers. \texttt{BatchCentering} replaces \texttt{BatchNormalization} at the same positions within the network, performing only a centering operation (zero-mean normalization without rescaling) to preserve the Lipschitz bound throughout the architecture.

        \paragraph{Optimizer.} 
        All experiments (LiST and baselines) use AdamW-ScheduleFree~\citep{defazio_road_2024} as the optimizer, which removes the need for a learning rate schedule via an internal weight-averaging mechanism. This choice ensures a fair comparison across methods and avoids confounding the results with schedule-specific tuning.

        \paragraph{Training budgets.}
        Each unconstrained baseline is trained for the number of epochs typically reported as sufficient to reach convergence in its original setting: 100 epochs for Standard cross-entropy and Label Smoothing, 350 epochs for Focal Loss (as in~\citep{mukhoti_calibrating_2020}), and 200 epochs for PGD-AT and CAAT. For the fixed-L Lipschitz baselines, we match the number of epochs required by LiST to converge on the same dataset (e.g., approximately 2200 epochs on CIFAR-10, see Stopping Criterion Sensitivity below), ensuring that fixed-$L$ models are not disadvantaged by an artificially shorter training budget.
        
        \paragraph{Stopping Criterion Sensitivity}
        The stopping criterion halts training when the relative range of $L_t$ over a sliding window of $W = 30$ epochs falls below $\varepsilon = 0.001$, corresponding to a variation of less than $0.1\%$ over the window. As a reference point, on CIFAR-10 this criterion triggers convergence after approximately 2200 epochs. These values were found to be robust across the datasets considered and were not individually tuned per dataset.

        \paragraph{Statistical Significance.}
        All results on CIFAR-10 reported in this paper are averaged over 3 independent runs with different random seeds, and we report mean and standard deviation throughout. The variance across runs is low in all settings: the standard deviation on clean accuracy remains below 0.30\%, on ECE below 0.25\%, and on CRA below 1.76\% for LiST, and even lower for fixed-L baselines. These figures confirm that the observed gains of LiST over competing methods (in particular the improvement in ECE) are not attributable to random fluctuations. Results on CIFAR-100 and Tiny-ImageNet are reported for a single run; given the low variance observed on CIFAR-10, which indicates stable training dynamics across seeds, and the prohibitive computational cost of repeating full training runs on larger datasets, we consider a single run sufficient to draw reliable conclusions.
        
        \paragraph{Compute Resources.}
        All experiments were conducted on a single NVIDIA RTX~4090 GPU. A full LiST training run on each dataset takes approximately 15 hours.
        
        We note that a non-negligible fraction of this wall-clock cost stems from evaluating test metrics at every epoch. Since ScheduleFree~\cite{defazio_road_2024} requires an additional weight-averaging pass, and BatchCentering statistics must be stabilized over ${\sim}50$ batches, each evaluation step involves overhead beyond the standard training forward pass. A lighter evaluation cadence (e.g., every 10 epochs) would reduce training time significantly without affecting convergence.
        
        The experiments reported in this paper (across all datasets, baselines, and ablations) correspond to approximately 100 training runs in total. Accounting for preliminary experiments and failed runs during development, the total compute budget is estimated at around 300 runs.

        \paragraph{Model Definition.}
        We show in~\Cref{tab:lipschitz_resnet18} the ResNet-18 Lipschitz architecture used in this paper. The $\sigma$ scaling factor is modified at each epoch to ensure that $\sigma^{18} = L_e$. It is thus computed as $\sigma = L_e^{1/18}$ and distributed uniformly across all layers (dense and convolutions). 

        \begin{table}[t]
            \centering
            \caption{
                \textbf{Architecture of the Lipschitz-constrained ResNet-18 used in this work.} Each BasicBlock follows the structure Conv$3{\times}3 \rightarrow$ BC $\rightarrow$ GroupSort2 $\rightarrow$ Conv$3{\times}3$, combined with a residual skip connection via a convex blend $\alpha \oplus (1-\alpha)\cdot\text{skip}$. BC denotes BatchCentering. The Lipschitz factor of each module is reported in the last column, with $\sigma$ the spectral bound of the corresponding orthogonal layer.
            }
            \label{tab:lipschitz_resnet18}
            \small
            \begin{tabular}{@{}lll@{}}
                \toprule
                \textbf{Module} & \textbf{Output shape} & \textbf{Lip. factor} \\
                \midrule
                \makecell[l]{\textbf{Stem} \\ \quad AdaptiveOrthoConv2d $\cdot$ BatchCentering $\cdot$ GroupSort2}
                  & $64 \times 32 \times 32$ & $\sigma$ \\
                \addlinespace
                \makecell[l]{\textbf{Stage 1} $\cdot$ 2 $\times$ BasicBlock $\cdot$ 64 ch \\
                             \quad skip: Identity, \quad stride 1}
                  & $64 \times 32 \times 32$ & $\sigma \times 4$ \\
                \addlinespace
                \makecell[l]{\textbf{Stage 2} $\cdot$ 2 $\times$ BasicBlock $\cdot$ 128 ch \\
                             \quad skip: PixelUnshuffle(2) + Conv$1{\times}1$, \quad stride 2}
                  & $128 \times 16 \times 16$ & $\sigma \times 4$ \\
                \addlinespace
                \makecell[l]{\textbf{Stage 3} $\cdot$ 2 $\times$ BasicBlock $\cdot$ 256 ch \\
                             \quad skip: PixelUnshuffle(2) + Conv$1{\times}1$, \quad stride 2}
                  & $256 \times 8 \times 8$ & $\sigma \times 4$ \\
                \addlinespace
                \makecell[l]{\textbf{Stage 4} $\cdot$ 2 $\times$ BasicBlock $\cdot$ 512 ch \\
                             \quad skip: PixelUnshuffle(2) + Conv$1{\times}1$, \quad stride 2}
                  & $512 \times 4 \times 4$ & $\sigma \times 4$ \\
                \addlinespace
                ScaledAdaptiveL2NormPool2d ($1{\times}1$) & $512$ & --- \\
                \addlinespace
                SpectralLinear $\rightarrow C$ classes & $C$ & $\sigma$ \\
                \bottomrule
            \end{tabular}
        \end{table}

    \section{Broader Impacts}
    \label{sec:broader_impacts}
    
        LiST is a generic training procedure that improves the robustness and
        calibration of image classifiers. Better-calibrated and more robust models support safer deployment in high-stakes settings (e.g.\ medical imaging or autonomous perception). Conversely, the same properties could be misused to lend unwarranted credibility to models trained on biased data or deployed in surveillance applications; LiST does not address dataset bias or the legitimacy of the underlying task. Our experiments use only standard public benchmarks, and we recommend combining LiST with fairness auditing and human oversight before any sensitive deployment.
    
    %\newpage
    %\input{checklist.tex}
    
\end{document}